\def\be{\begin{equation}}
\def\ee{\end{equation}}
\def\bee{\begin{eqnarray}}
\def\ene{\end{eqnarray}}
\def\bes{\begin{subequations}}
\def\ees{\end{subequations}}
\newtheorem{theorem}{Theorem}
\newtheorem{lemma}{Lemma}
\newtheorem{myremark}{Remark}
\newtheorem{definition}{Definition}
\def\v{\vspace{0.1in}}
\begin{document}

\baselineskip=14pt \renewcommand {\thefootnote}{\dag}
\renewcommand
{\thefootnote}{\ddag} \renewcommand {\thefootnote}{ }

\pagestyle{plain}

%\begin{center}
%\baselineskip=16pt \leftline{} \vspace{-.3in} {\Large \bf Solitons and rogue wave formation in Gross-Pitaevskii equation with current nonlinearity and potential} \\[0.2in]
%\end{center}

\begin{center}
\baselineskip=16pt \leftline{}
\baselineskip=16pt \leftline{} \vspace{-.3in} {\Large \textbf{{\ Efficient High-Accuracy PDE\MakeLowercase{s} Solver with the Linear Attention Neural Operator}}} \\[0.2in]

Ming Zhong$^{1,2}$  and Zhenya Yan$^{\mathrm{*,3,2}}$
\\[0.05in]
{\small $^1$  School of Advanced Interdisciplinary Sciences, University of Chinese Academy of Sciences, Beijing 100049, China
 $^2$State Key Laboratory of Mathematical Sciences, Academy of Mathematics and Systems Science, Chinese Academy of Sciences, Beijing 100190, China \newline
$^3$ School of Mathematics and Information Science, Zhongyuan University of Technology, Zhengzhou 450007, China \footnote{$^{*}$Corresponding author.
\textit{Email address}: zyyan@mmrc.iss.ac.cn}}
\end{center}
{\baselineskip=13pt }
%\begin{tabular}{p{16cm}}
% \hline \\
%\end{tabular}

%\vspace{0.6in}

%\begin{abstract}
\vspace{0.1in}
%\begin{abstract} \small \baselineskip=12pt

\noindent
\textbf{Abstract:}
Neural operators offer a powerful data-driven framework for learning mappings between function spaces, in which the transformer-based neural operator architecture faces a fundamental scalability-accuracy trade-off: softmax attention provides excellent fidelity but incurs quadratic complexity $\mathcal{O}(N^2 d)$ in the number of mesh points $N$ and hidden dimension $d$, while linear attention variants reduce cost to $\mathcal{O}(N d^2)$ but often suffer significant accuracy degradation. To address the aforementioned challenge, in this paper, we present a novel type of neural operators, Linear Attention Neural Operator (LANO), which achieves both scalability and high accuracy by reformulating attention through an agent-based mechanism.
LANO resolves this dilemma by introducing a compact set of $M$ agent tokens $(M \ll N)$ that mediate global interactions among $N$ tokens. This agent attention mechanism yields an operator layer with linear complexity $\mathcal{O}(MN d)$ while preserving the expressive power of softmax attention. Theoretically, we demonstrate   the universal approximation property,  thereby demonstrating improved conditioning and stability properties.
Empirically, LANO surpasses current state-of-the-art neural PDE solvers, including Transolver with slice-based softmax attention, achieving average $19.5\%$ accuracy improvement across standard benchmarks. By bridging the gap between linear complexity and softmax-level performance, LANO establishes a scalable, high-accuracy foundation for scientific machine learning applications.

%\end{abstract}
\vspace{0.1in} \noindent \textit{Keywords:} \thinspace \thinspace\ partial differential equation,  linear agent attention, linear attention neural operator, scientific machine learning

\vspace{-0.05in}
\baselineskip=15pt
\vspace{0.1in}

\section{Introduction}
\label{IN}

 To better understand physical phenomena across science and engineering, it is a key point to solve the corresponding partial differential equations (PDEs) \cite{friedman1975stochastic,braun1983differential,smith2010introduction,logan2014applied,simmons2016differential}. Yet, as a cornerstone of computational science, their practical solution is often thwarted by overwhelming computational demands \cite{jaun1999numerical,wendt2008computational,tadmor2012review,ames2014numerical}. Standard numerical techniques, which operate on discrete grids, incur prohibitive costs when applied to realistic scenarios with complex geometries or coupled physical processes, creating a significant gap between theoretical modeling and practical application.

The rise of deep learning has introduced promising alternatives to bridge this gap. As an early representative, the Deep Ritz Method \cite{yu2018deep} pioneered a deep learning approach by representing PDE solutions with neural networks and directly minimizing the associated energy functional. This strategy bridged classical variational methods with modern deep learning. Subsequently, Physics-Informed Neural Networks (PINNs) \cite{raissi2019physics, lu2021deepxde, karniadakis2021physics} and related physics-encoded architectures such as the Physics-encoded Recurrent Convolutional Neural Network (PeRCNN) \cite{rao2023encoding} gained broader attention by embedding physical laws into neural representations-either through loss functions or network structures-to solve PDEs. Despite their success, a key limitation of these methods is their confinement to single problem instances; both lack solution operator learning capability, requiring expensive retraining for new configurations and leading to poor generalization and computational inefficiency \cite{krishnapriyan2021characterizing, wang2022and}.

This limitation motivated the development of {\it Neural Operators}~\cite{kovachki2021universal,li2020fourier,lu2021learning}, which represent a paradigm shift. Instead of solving a single instance of a PDE, neural operators aim to learn mappings between infinite-dimensional Banach spaces, where an element from one space (such as a functional parameter defining an initial condition) is mapped to an element in another (the corresponding physical state). This foundational formulation enables them to capture the underlying solution operator itself. Once trained, a neural operator can thus provide instantaneous predictions for new problem configurations without retraining, offering a rigorous path toward real-time simulation. This work builds upon this powerful framework, focusing on enhancing the architecture of neural operators for greater efficiency and accuracy.

Existing neural operator architectures can be broadly categorized into two lineages: spectral-based and transformer-based methods. Spectral-based operators, such as the Fourier Neural Operator (FNO) \cite{li2020fourier,li2023fourier}, leverage global convolutions in the frequency domain to efficiently parameterize the integral kernel, demonstrating exceptional performance on regular grids.
The FNO framework achieves remarkable accuracy and computational efficiency due to its spectral formulation. Nevertheless, its applicability is limited to structured domains with regular grids and periodic boundary conditions. To address these limitations, subsequent extensions \cite{li2023fourier,li2023geometry} have generalized FNO to more complex geometries. However, these improvements come at the cost of increased computational complexity, thereby motivating the search for alternative formulations that retain spectral efficiency while accommodating general domains.

Another important line of research focuses on neural operators based on the Transformer architecture, which can be used to handle functions defined over irregular domains. As the backbone of the underlying model,  Transformer \cite{vaswani2017attention} has revolutionized fields such as natural language processing \cite{devlin2019bert,brown2020language}, computer vision \cite{dosovitskiy2020image,liu2021swin}, and generative modeling \cite{rombach2022high,peebles2023scalable}, owing to its remarkable ability to model long-range dependencies and capture global relational structures.
More recently, this architecture has been extended to the context of PDEs, with the aim of learning mappings between function spaces \cite{cao2021choose}. Solving PDEs, however, typically requires fine-grained discretization of complex geometric domains, which leads to a prohibitively large number of mesh nodes. Directly applying the standard Transformer to such massive data volumes faces two major challenges: the high computational cost and the difficulty of effectively capturing the structural relationships imposed by the mesh \cite{liu2021swin,katharopoulos2020transformers}. In practice, introducing linear attention mechanisms \cite{cao2021choose} alleviates the computational burden but often comes at the expense of reduced accuracy. Consequently, designing Transformer variants that are simultaneously computationally efficient and capable of maintaining high predictive accuracy for PDE problems has emerged as a key frontier in neural operator research \cite{kovachki2023neural}.

In Ref.~\cite{cao2021choose}, two variants of the linear attention mechanism were proposed, namely the Fourier-type and the Galerkin-type ones formulated as follows:
\begin{equation}
\begin{array}{l}
\text{Fourier-type:}\,\,
\mathbf{Z} = \frac{1}{n} \hat{\mathbf{Q}} \hat{\mathbf{K}}^\top \mathbf{V},\qquad
\text{Galerkin-type:}\,\,
\mathbf{Z} = \frac{1}{n} \mathbf{Q} \big(\hat{\mathbf{K}}^\top \hat{\mathbf{V}}\big),
\end{array}
\end{equation}
where $\mathbf{Q}, \mathbf{K}, \mathbf{V}$ denote the query, key, and value matrices in the attention mechanism, respectively, and $\hat{\cdot}$ indicates the matrix normalized via layer normalization. Both formulations enjoy linear complexity $O(Nd^2)$ owing to the commutative property of matrix multiplication.

Nevertheless, despite the reduced computational complexity achieved by linear attention, this advantage often comes at the cost of a substantial decline in accuracy, thereby limiting its effectiveness in approximating solutions to PDEs \cite{li2022transformer,li2023scalable,hao2023gnot,xiao2023improved,shih2025transformers}. Motivated by advances in the Vision Transformer (ViT) and Swin Transformer \cite{dosovitskiy2020image,liu2021swin}, several approaches \cite{ovadia2024vito,wang2024cvit,liu2024mitigating} have sought to mitigate this issue by incorporating hierarchical convolutional operations to aggregate features and reduce dimensionality prior to computing attention scores with softmax. While effective, such inductive biases typically presuppose grid-like or otherwise structured data representations, which may not always be available in complex scientific domains.

Subsequently, some researchers have explored alternative approaches to reduce token counts, moving beyond reliance on convolutional operations. These methods primarily leverage cross-attention and projection-unprojection techniques \cite{wang2024latent,wu2024transolver,alkin2024universal}. In these frameworks, the latent token representation $\mathbf{Q}$ is obtained using one of two strategies: it is either treated as a learnable parameter or constructed as an aggregation of the initial feature set via weighted summation. This design enables the model to compress high-dimensional input features into a compact latent space, thereby reducing the computational burden associated with standard attention mechanisms while retaining critical information.

Furthermore, by decoupling the latent token computation from strict grid structures, these approaches facilitate the handling of irregular domains and unstructured meshes, which are commonly encountered in scientific computing and PDE simulations. Nevertheless, the choice of aggregation function and latent dimensionality plays a crucial role in balancing efficiency and accuracy, and suboptimal configurations may lead to information loss or degraded performance. Consequently, the development of principled strategies for latent token construction remains an active and important research direction in neural operator design.

Establishing a new state-of-the-art, Transolver \cite{wu2024transolver} addressed the computational bottleneck of self-attention by employing a dynamic token reduction strategy. Specifically, the model incorporates a module that learns to generate a weight matrix $W$ from the input $X$, which is then used to construct a compact latent representation-referred to as slices-of size $M$ via a weighted aggregation of the original features. Self-attention is subsequently applied to this reduced set of latent tokens, enabling highly efficient computation. The final output is obtained by projecting the results from the latent space back to the original high-dimensional space, thereby achieving superior performance while preserving the structural fidelity of the input.

Despite reducing computational complexity to $O(M^2 d)$, where $M \ll N$, this approach is limited in its ability to extract information from the original feature space, as it primarily operates on the projected slice space $S = WX$. Additionally, the representational capacity is constrained by sharing the projection weights $W$ across all attention heads, which may hinder the model's expressivity for capturing complex dependencies in high-dimensional PDE data.

Due to space constraints, several notable neural operator architectures are not discussed in detail here. These include, but are not limited to, DeepONet \cite{lu2021learning,wang2021learning,jin2022mionet,kopanivcakova2025deeponet}, Wavelet Neural Operator \cite{gupta2021multiwavelet,tripura2023wavelet}, Koopman Neural Operator \cite{xiong2024koopman}, as well as other recent approaches \cite{kissas2022learning,seidman2022nomad,he2023mgno,fanaskov2023spectral,cao2024laplace,azizzadenesheli2024neural,gao2024adaptive,liu2024render,zhang2024bayesian,lee2025finite,luo2025efficient,eshaghi2025variational,bahmani2025resolution,zeng2025point,yueholistic}.

To address the aforementioned challenges, in this paper, we would like to propose a novel Linear Attention Neural Operator (LANO). The motivation for LANO stems from a key insight: rather than compressing the original tokens into an isolated latent space for subsequent interaction, it is more effective to introduce a lightweight ``agent'' layer \cite{han2024agent} that establishes a bidirectional, continuous communication mechanism between the original feature space and a compact agent space.

The main contributions of LANO in this paper are manifested in the following aspects:

\begin{itemize}

   \vspace{0.1in} \item \textbf{Bridging, Not Replacing, Interaction Mechanism:} LANO does not \textit{replace} the original tokens with latent tokens. Instead, it introduces a small set of agent tokens (\(M \ll N\)). These agent tokens do not supersede the original tokens but act as ``hubs'' for global interaction. They facilitate bidirectional communication with all original tokens via cross-attention: on one hand, they aggregate global information from the original space, and on the other hand, they broadcast the integrated information back to each original token. This design ensures the model maintains access to rich original features throughout the forward propagation process, effectively mitigating the potential information loss during the compression stage observed in models like Transolver.

  \vspace{0.1in}  \item \textbf{Decoupled, More Expressive Architecture:} The agent mechanism  naturally decouples feature aggregation from relational modeling. Each agent token can freely learn to focus on different aspects or patterns of the input data, unlike in Transolver where the model is constrained by a single projection matrix shared across all attention heads. This significantly enhances the model's expressive power and flexibility, enabling it to more effectively capture complex multi-scale physical features in PDE solutions.

   \vspace{0.1in} \item \textbf{Unification of Linear Complexity and High Accuracy:} By having the agents (instead of all \(N\) tokens) handle the most computationally intensive global interactions, LANO reduces the complexity of the core operation to a linear \(\mathcal{O}(MNd)\). This not only guarantees the model's scalability but, more importantly, because the agents maintain a tight connection to the original space, LANO  surpasses the approximation capability and accuracy of the slice-based softmax attention mechanism while maintaining linear complexity. This fundamentally resolves the ``efficiency-accuracy'' trade-off.
\end{itemize}

\vspace{0.1in}
The rest of this paper is organized as follows. \Cref{NOS} introduces the neural operator framework. \Cref{LANO} presents our LANO architecture, consisting of three main stages: an encoder, a processor, and a decoder, and gives the universal approximation theorem for LANO in details. \Cref{Num} demonstrates some numerical experiments via LANO for effectively solving higher-dimensional PDEs, five widely used physics problems in solid mechanics and fluid mechanics-Elasticity, Plasticity, Airfoil, Pipe, and Darcy flow. Finally, we present some conclusions in \Cref{sec:conclusions}. In Appendix A, we presents the details proof of Theorem 3.6 about the universal approximation theorem for LANO. In Appendix B, we give the details of our numerical experiments in Section 4, including metrics, and implementations.

\section{A brief review on neural operators}
\label{NOS}

In this section, we briefly recall the concepts of neural operators and some classical tyeps of neural operators \cite{kovachki2023neural,lanthaler2025nonlocality}.
We consider parameterized families of partial differential equations (PDEs) posed on a bounded domain \(\Omega \subset \mathbb{R}^{d_x}\):
\begin{equation}\label{PDE}
\left\{\begin{aligned}
     L_a u(x) &= f(x), \qquad x \in \Omega, \\
     u(x) &= u_{0}(x), \quad x \in \partial \Omega,
\end{aligned}\right.
\end{equation}
where the parameter \(a \in A\) encodes problem-specific information.
Depending on the setting, \(a\) may represent a spatially varying coefficient, an initial condition, or  a forcing term.
The solution \(u: \Omega \to \mathbb{R}\) is sought in a Banach space \(U\), while the operator \(L_a: U \to U^*\) is linear (and possibly unbounded), mapping the solution space to its dual.

The parameter-to-solution correspondence is naturally described by
\begin{equation}
  G^\dagger: A \to U,
\qquad
G^\dagger(a) = u,
\end{equation}
which associates to each admissible parameter the corresponding PDE solution.
A classical instance is the elliptic equation with heterogeneous diffusion (see \Cref{DE}),
\begin{equation}
L_a = -\nabla \cdot (a \nabla),
\end{equation}
equipped with homogeneous Dirichlet boundary conditions.
In this case, one may identify
\begin{equation}
A = L^\infty(D; \, \mathbb{R}_+),
\qquad
U = H_0^1(D),
\qquad
U^* = H^{-1}(D).
\end{equation}

% \section*{Learning Objective}

The aim of neural operator is to approximate the infinite-dimensional operator \(G^\dagger\) using only a finite collection of samples.
To this end, one introduces a parametric hypothesis class
$G_\theta: A \to U,
\quad \theta \in \mathbb{R}^p$,
and seeks a choice of parameters \(\theta^\ast\) for which \(G_{\theta^\ast}\) reliably mimics the action of \(G^\dagger\).
A natural metric for approximation is the expected error in the Bochner norm:
\begin{equation}
\|G^\dagger - G_\theta\|_{L^2_\mu(A;U)}^2
= \mathbb{E}_{a \sim \mu} \big[ \|G^\dagger(a) - G_\theta(a)\|_U^2 \big].
\end{equation}
This leads to the population minimization problem
\begin{equation}
\min_{\theta \in \mathbb{R}^p}
\; \mathbb{E}_{a \sim \mu} \big[\|G^\dagger(a) - G_\theta(a)\|_U^2\big],
\end{equation}
which in practice is replaced by empirical risk minimization:
\begin{equation}
\min_{\theta \in \mathbb{R}^p}
\; \frac{1}{N} \sum_{i=1}^N \|u^{(i)} - G_\theta(a^{(i)})\|_U^2.
\end{equation}

Beyond average performance, one may also require uniform control over compact subsets of the parameter space.
Given \(K \subset A\) compact, this leads to the worst-case error criterion
\begin{equation}
\sup_{a \in K} \|G^\dagger(a) - G_\theta(a)\|_U,
\end{equation}
which is more aligned with classical approximation theory.

In practice, the domain $\Omega$ is discretized into $N$ points, and we typically observe a finite training dataset $\{(a^{(i)}, u^{(i)})\}_{i=1}^K$,
where the parameters \(a^{(i)}\) are drawn independently from a probability measure \(\mu\) supported on \(A\), and the corresponding solution \(u^{(i)}\), where $a^{(i)},u^{(i)}\in\mathbb{R}^N$, representing the points evaluations.

As a representative example of neural operators, the Graph Kernel Network (GKN)~\cite{anandkumar2020neural,li2020multipole}, which is designed to approximate the Green's function associated with \Cref{PDE}.
Recall that the Green's function is a mapping $G:\Omega\times\Omega\to\mathbb{R}^{d_u}$ defined by
\begin{equation}\label{Green}
    L_a G(x,\cdot) = \delta_x ,
\end{equation}
where $\delta_x$ denotes the Dirac measure on $\Omega$ centered at $x$.
With this definition, the solution of \Cref{PDE} admits the representation
\begin{equation}\label{Green2}
    u(x) = \int_{\Omega} G(x,y)\, f(y)\, dy .
\end{equation}

Building upon the formulation in \Cref{Green2}, GKN introduces an iterative update scheme indexed by $t=0,\dots,T-1$:
\begin{equation}\label{IT}
v_{t+1}(x) = \sigma \Bigg(
    W v_t(x) + \int_{\Omega} \kappa_\theta\big(x, y, a(x), a(y)\big)\, v_t(y)\, \nu_x(dy)
\Bigg),
\end{equation}
where the components are specified as follows:
\begin{itemize}

\vspace{0.05in}    \item The initialization is given by $v_0 = \mathcal{L}(x,a(x))$, with $\mathcal{L}: \mathbb{R}^{d_x+d_a} \to \mathbb{R}^d$ denoting a \emph{lifting operator} that embeds the input pair $(x,a(x))$ into a higher-dimensional latent space.

 \vspace{0.05in}   \item $\sigma: \mathbb{R}\to\mathbb{R}$ is a nonlinear activation function applied elementwise.

 \vspace{0.05in}   \item $W\in\mathbb{R}^{d\times d}$ is a learnable weight matrix representing local transformations of latent features.

 \vspace{0.05in}   \item $\nu_x$ is a prescribed Borel measure associated with each $x\in\Omega$, typically chosen to be the Lebesgue measure.

    \vspace{0.05in}\item $\kappa_\theta: \mathbb{R}^{2(d_x+d_a)} \to \mathbb{R}^{d\times d}$ is a kernel function parameterized by $\theta$, commonly realized via a neural network, encoding pairwise interactions between $(x,a(x))$ and $(y,a(y))$.

 \vspace{0.05in}   \item After $T$ iterations, the final representation is projected back to the physical solution space through $u(x) = \mathcal{P}(v_T(x))$, where $\mathcal{P}: \mathbb{R}^d \to \mathbb{R}^{d_u}$ denotes the \emph{projection operator}.
\end{itemize}

\vspace{0.05in}
The update rule can be interpreted as comprising two principal components: a linear transformation of the current state $v_t(x)$ through the matrix $W$, and a nonlocal interaction term that aggregates information from the entire domain $\Omega$ via the kernel function $\kappa_\theta$. Both the kernel parameters $\theta$ and the transformation matrix $W$ are learned from data, thereby enabling the model to capture intricate dependencies across the spatial domain.

The kernel $\kappa_\theta$ constitutes the central mechanism in \Cref{IT}. In GKN \cite{anandkumar2020neural,gilmer2017neural}, $\kappa_\theta$ is commonly implemented as a fully connected layer, while the integral operator is truncated to a local neighborhood of $x$ determined by a prescribed radius $r$. Under this construction, the update of $v_t$ can be equivalently formulated within the message-passing paradigm of graph neural networks \cite{brandstetter2022message}.

% Nevertheless, this formulation entails substantial computational overhead and exhibits pronounced sensitivity to hyperparameters, particularly the choice of neighborhood radius $r$. It is therefore critical to devise more efficient and robust alternatives that mitigate these limitations while retaining the expressivity of the kernel-based framework.

A breakthrough architecture in this line of research is the Fourier Neural Operator (FNO) \cite{li2020fourier,li2023fourier,kovachki2021universal,tran2021factorized,wen2022u,rahman2022u,bonev2023spherical}, which evaluates the integral operator in the Fourier domain. Specifically, the kernel in the integral is assumed to be independent of $a(x), a(y)$ and to satisfy translation invariance, i.e.,
\begin{equation}\label{FNO}
\kappa_\theta\big(x, y, a(x), a(y)\big) = \kappa_\theta(x-y).
\end{equation}
Under this assumption, the integral operator can be realized as a convolution and computed efficiently in the Fourier domain via the Fast Fourier Transform (FFT). Concretely, the integral in \Cref{IT} can be expressed as
\begin{equation}
\int_{\Omega}\kappa_{\theta}(x-y)v_t(y)dy = \mathcal{F}^{-1}\left(R_{\theta}*\mathcal{F}(v_t)\right)(x),
\end{equation}
where $\mathcal{F}$ and $\mathcal{F}^{-1}$ denote the Fourier transform and its inverse, respectively. Here $R_\theta$ is a learnable operator that applies a mode-wise linear transformation to each Fourier coefficient with $|k|\leq k_{\text{max}}$, effectively truncating the high-frequency components.
Within the framework of \Cref{IT}, transformer-based neural operators can be also interpreted as specific kernel instantiations \cite{kovachki2021universal,cao2021choose,wu2024transolver}.

\section{Methodology}
\label{LANO} In this section, we propose a novel class of the linear attention neural operator for PDEs.

\subsection{Agent Attention}
We first revisit the standard attention mechanism \cite{vaswani2017attention}. Let
$\mathbf{Q}, \mathbf{K}, \mathbf{V} \in \mathbb{R}^{N \times d}$
denote the query, key, and value matrices obtained via linear projections
of an initial lifted representation $\mathbf{X} \in \mathbb{R}^{N \times d}$.
The conventional softmax and linear attention can be formulated as \cite{katharopoulos2020transformers,wang2020linformer,shen2021efficient,han2024agent}
\begin{equation}
\begin{aligned}
\mathbf{O}_{\mathrm{soft}} &= \mathrm{softmax}\left(\frac{\mathbf{Q}\mathbf{K}^\top}{\sqrt{d}}\right)\mathbf{V}
\equiv \mathcal{A}_{\mathrm{soft}}(\mathbf{Q}, \mathbf{K}, \mathbf{V}), \\[1em]
\mathbf{O}_{\mathrm{lin}} &= \boldsymbol{\phi}(\mathbf{Q})\,\boldsymbol{\phi}(\mathbf{K})^\top \mathbf{V}
\equiv \mathcal{A}_{\mathrm{lin}}(\mathbf{Q}, \mathbf{K}, \mathbf{V}),
\end{aligned}
\end{equation}
where $\mathrm{softmax}(\cdot)$ is applied row-wise, and
$\boldsymbol{\phi}(\cdot)$ denotes a suitable feature mapping for linearized attention.
Softmax attention requires computing the pairwise similarity matrix $\mathbf{Q}\mathbf{K}^\top \in \mathbb{R}^{N \times N}$, yielding a time complexity of
$\mathcal{O}(N^2 d)$,
while linear attention reduces this by applying the feature map $\boldsymbol{\phi}(\cdot)$, resulting in $\mathcal{O}(N d^2),$
since the matrix multiplications involve $N \times d$ and $d \times d$ matrices rather than $N \times N$.

To improve computational efficiency while maintaining expressive capacity, a set of \emph{agent tokens} \cite{wang2020linformer,shen2021efficient,han2024agent}
$\mathbf{A} \in \mathbb{R}^{M \times C}$ with $M \ll N$ is derived by pooling features from the query matrix $\mathbf{Q}$.
These tokens act as compact intermediate representations that facilitate interactions between the queries and the key-value pairs.
The agent-mediated attention proceeds in two stages:
\begin{equation}
\begin{aligned}
\mathbf{Y}_{\mathrm{agg}}
= \underbrace{\mathcal{A}_{\mathrm{soft}}(\mathbf{A}, \mathbf{K}, \mathbf{V})}_{\text{Agent Aggregation Stage}},\qquad
\mathbf{O}_{\mathrm{agent}}
= \mathcal{A}_{\mathrm{soft}}\Bigl(\mathbf{Q}, \mathbf{A}, \mathbf{Y}_{\mathrm{agg}}\Bigr).
\end{aligned}
\end{equation}
The two stages have complexities
\begin{align*}
\text{Agent Aggregation Stage: } & \mathbf{Y}_{\mathrm{agg}} = \mathcal{A}_{\mathrm{soft}}(\mathbf{A}, \mathbf{K}, \mathbf{V})
\quad \Rightarrow \quad \mathcal{O}(M N d),\\
\text{Agent-mediated Attention Stage: } & \mathbf{O}_{\mathrm{agent}} = \mathcal{A}_{\mathrm{soft}}(\mathbf{Q}, \mathbf{A}, \mathbf{Y}_{\mathrm{agg}})
\quad \Rightarrow \quad \mathcal{O}(N M d),
\end{align*}
so that the overall complexity is
\[
\mathcal{O}(N M d + M N d) = \mathcal{O}(2 N M d) = \mathcal{O}(N M d),
\]
which is significantly lower than the standard softmax attention $\mathcal{O}(N^2 d)$ when $M \ll N$,
while still retaining expressive power via the agent tokens.

Equivalently, this attention can be reformulated to reveal its connection to generalized linear attention \cite{han2024agent,wang2020linformer,shen2021efficient}:
\begin{equation}
\begin{aligned}
\label{AT}
\mathbf{O}_{\mathrm{agent}}
&= \mathrm{softmax}\left(\frac{\mathbf{Q}\mathbf{A}^\top}{\sqrt{d}}\right)\,
   \mathrm{softmax}\left(\frac{\mathbf{A}\mathbf{K}^\top}{\sqrt{d}}\right)\mathbf{V} \\[2mm]
&= \boldsymbol{\phi}_q(\mathbf{Q})\,
   \boldsymbol{\phi}_k(\mathbf{K})^\top \mathbf{V} \\[1em]
&\equiv \mathcal{A}_{\phi_q/\phi_k}(\mathbf{Q}, \mathbf{K}, \mathbf{V}),
\end{aligned}
\end{equation}
where the mappings $\boldsymbol{\phi}_q(\cdot)$ and $\boldsymbol{\phi}_k(\cdot)$ are implicitly defined via the agent-mediated transformations.
The agent tokens $\mathbf{A}$ serve as bottleneck representations, enabling computational efficiency while preserving expressive capacity through the composed attention operations.

The agent attention Eq.~(\ref{AT}) can also be interpreted in a broader theoretical context.
Specifically, the two-stage attention process, with agent tokens acting as bottlenecks, can be seen as a finite-dimensional approximation of more general continuous integral operators which aims to learn the mapping between two function space.
This perspective motivates the following formal statement, which characterizes generalized linear attention in transformers:

\begin{theorem}
The generalized linear attention mechanism in \Cref{AT} can be formulated as a Monte-Carlo approximation of a kernel in the integral operator \Cref{IT}.
\end{theorem}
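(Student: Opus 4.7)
The plan is to interpret the discrete matrices $\mathbf{Q},\mathbf{K},\mathbf{V}$ as point evaluations of continuous feature fields at a collection of mesh points $\{x_i\}_{i=1}^N\subset\Omega$ drawn from a measure $\nu$, and then to recognize the two-stage agent attention as a Monte-Carlo sum approximating the nonlocal integral operator appearing in \Cref{IT}. First I would introduce continuous feature maps $q,k,v:\Omega\to\mathbb{R}^d$ induced by the lifted field $v_t$ and the input $a$, so that the $i$-th row of $\mathbf{Q},\mathbf{K},\mathbf{V}$ corresponds to $q(x_i),k(x_i),v(x_i)$ respectively. The agent tokens $\{a_m\}_{m=1}^M$, pooled from $\mathbf{Q}$, form a finite dictionary of landmark vectors in $\mathbb{R}^d$ whose parameters are absorbed into the trainable set $\theta$.

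The next step is to expand \Cref{AT} component-wise. Writing $O_i$ for the $i$-th output row, a direct calculation gives
\begin{equation*}
O_i \;=\; \sum_{j=1}^{N}\Bigg[\sum_{m=1}^{M} \frac{\exp(q(x_i)\!\cdot\! a_m/\sqrt{d})}{\sum_{m'}\exp(q(x_i)\!\cdot\! a_{m'}/\sqrt{d})}\cdot\frac{\exp(a_m\!\cdot\! k(x_j)/\sqrt{d})}{\sum_{j'}\exp(a_m\!\cdot\! k(x_{j'})/\sqrt{d})}\Bigg]v(x_j).
\end{equation*}
Introducing the empirical measure $\hat{\nu}_N=\tfrac{1}{N}\sum_{j}\delta_{x_j}$, I would identify the inner bracket (suitably rescaled by $N$) with the value at $(x_i,x_j)$ of an effective kernel
\begin{equation*}
\kappa_\theta(x,y;a) \;:=\; \sum_{m=1}^{M} \phi^{(m)}_q(x)\,\phi^{(m)}_k(y),
\end{equation*}
where $\phi^{(m)}_q(x)$ and $\phi^{(m)}_k(y)$ are the softmax-normalized affinities of the feature vectors with the $m$-th agent. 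The output then reads $O_i = \tfrac{1}{N}\sum_{j=1}^{N}\kappa_\theta(x_i,x_j;a)\,v(x_j)$, which is exactly a Monte-Carlo estimator that converges as $N\to\infty$ to $\int_\Omega \kappa_\theta(x_i,y;a)\,v(y)\,d\nu(y)$, matching the nonlocal integral term in \Cref{IT} with measure $\nu_x=\nu$ and parameters $\theta$. The dependence on $a(x),a(y)$ enters implicitly through the lifting operator that produces $q,k,v$, yielding the full form $\kappa_\theta(x,y,a(x),a(y))$ required by \Cref{IT}.

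The hard part will be the careful bookkeeping of the two softmax normalizations. The first-stage softmax produces an $x$-dependent partition function, while the second-stage softmax produces an agent-dependent partition function; neither is literally present in the clean form \Cref{IT}. The cleanest route, which I would follow, is to treat the resulting quotients as self-normalized importance weights so that the kernel $\kappa_\theta$ defined above already absorbs these denominators, making the empirical sum a self-normalized Monte-Carlo estimator of $\int \kappa_\theta(x,y;a)v(y)\,d\nu(y)$. Once this identification is in place, continuity of the softmax-composed integrand in $(x,y)$ together with the weak convergence $\hat\nu_N\Rightarrow \nu$ delivers the claim by the law of large numbers. A secondary subtlety I would verify is that the agent dictionary $\{a_m\}$, though data-dependent during training, is treated as a fixed set of landmark vectors for a given input at inference, so that $\kappa_\theta$ is a genuine element of the kernel class admitted by \Cref{IT}; this step essentially shows that agent attention realizes a low-rank, Nyström-type parametrization of that kernel class.
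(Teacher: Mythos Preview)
Your approach is correct and shares the same core idea as the paper: both identify the agent-attention output as an empirical sum of the form $\sum_j \kappa_\theta(x_i,x_j)\,v(x_j)$ with a rank-$M$ separable kernel and then read that sum as a Monte-Carlo estimate of the nonlocal integral in \Cref{IT}. The paper's argument is briefer and runs in the opposite direction: it begins with the continuous operator $\mathcal{T}(f)(x)=\int_\Omega\kappa(x,y)f(y)\,dy$, \emph{defines} the kernel as $\kappa(x,y)=\langle\varphi_q(x),\varphi_k(y)\rangle\,W_v$ with $\varphi_q=\phi_q\circ W_q f$ and $\varphi_k=\phi_k\circ W_k f$, applies a plain Monte-Carlo quadrature over $\{y_i\}$, and absorbs the constant $|\Omega|/N$ into $W_v$ to recover the generalized linear form $\phi_q(\mathbf{Q})\phi_k(\mathbf{K})^\top\mathbf{V}$ of \Cref{AT}. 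It never unpacks the two softmax normalizations; those are simply hidden inside the abstract maps $\phi_q,\phi_k$. Your explicit expansion and your handling of the second-stage denominator $\sum_{j'}\exp(a_m\!\cdot\!k(x_{j'})/\sqrt{d})$ as a self-normalized importance weight address precisely the point the paper leaves implicit, so your route is a bit more honest about where the $N$-dependence sits, at the cost of extra bookkeeping. Either direction establishes the claim.
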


\begin{proof}
Consider an input function $\bm{f}:\Omega\rightarrow\mathbb{R}^{d}$ mapping from domain $\Omega$ to a $d$-dimensional space. The integral operator $\mathcal{T}$ acting on this function space is defined as (see \Cref{IT}):
\begin{equation}
\label{NO}
  \mathcal{T}(\bm{f})(x) = \int_{\Omega} \kappa(x, y) \bm{f}(y)  dy,
\end{equation}
where $x \in \Omega \subset \mathbb{R}^{d_x}$ and $\kappa: \Omega \times \Omega \rightarrow \mathbb{R}^{d \times d}$ is a kernel function characterizing the integral.

 Firstly, we define the kernel function as:
\begin{equation}\label{kernel}
  \kappa(x, y) = \left\langle \varphi_q(x),\, \varphi_k(y) \right\rangle \bm{W}_v,
\end{equation}
where
\begin{equation}\label{map}
\begin{aligned}
\varphi_q(x) = \mathbf{\phi_q}(\bm{W}_q \bm{f}(x)), \qquad
\varphi_k(y) = \mathbf{\phi_k}(\bm{W}_k \bm{f}(y)).
\end{aligned}
\end{equation}
Here $\bm{W}_q, \bm{W}_k, \bm{W}_v \in \mathbb{R}^{d \times d}$ are learnable weight matrices, $\phi_q,\,\phi_k$ correspond to the definitions in \Cref{AT} and $\langle \cdot, \cdot \rangle$ denotes the inner product in $\mathbb{R}^{M}$.

Now, consider a discrete set of $N$ sample points $\{y_1, \dots, y_N\}$ with $y_i \in \Omega$. Applying the Monte-Carlo approximation to the integral:
\begin{equation}
\int_{\Omega} \left\langle \varphi_q(x),\, \varphi_k(y) \right\rangle dy
\approx \frac{|\Omega|}{N} \sum_{i=1}^N  \left\langle \varphi_q(x),\, \varphi_k(y_i) \right\rangle.
\end{equation}
Substituting this approximation into the integral operator in Eq.~(\ref{NO}) yields
\begin{equation}
\mathcal{T}(\bm{f})(x)  \approx  \frac{|\Omega|}{N} \sum_{i=1}^N \left\langle \varphi_q(x),\, \varphi_k(y_i) \right\rangle \bm{W}_v \bm{f}(y_i).
\end{equation}
Since the scaling factor $\frac{|\Omega|}{N}$ is constant with respect to mesh index $i$, it can be absorbed into the weight matrix $\bm{W}_v$. Without loss of generality, we retain the notation $\bm{W}_v$ for the rescaled matrix.
This yields the final approximation:
\begin{equation}
   \mathcal{T}(\bm{f})(x)  \approx   \sum_{i=1}^N \left\langle \varphi_q(x),\, \varphi_k(y_i) \right\rangle \bm{W}_v \bm{f}(y_i),
\end{equation}
where the right-hand side corresponds precisely to the agent attention mechanism  in our framework.
\end{proof}

Having verified the correspondence between the integral kernel and the agent attention mechanism in our framework, we next turn to introduce a key component that will support the subsequent analysis of our model--specifically, the average neural operator (ANO). Let's first introduce ANO \cite{lanthaler2025nonlocality}.
\begin{definition}
     ANO is formally defined as an update operator that satisfies
\begin{equation}\label{ANO}
\begin{cases}
\bm{v}_{t+1}(x) = \sigma \Big(
    W \bm{v}_t(x) + \mathcal{T}(\bm{v}_t)(x)
\Big), & \forall x \in \Omega, \\[6pt]
\mathcal{T}(\bm{v}_t)(x) \equiv \int_{\Omega} \bm{v}_t(y)\, dy, &
\end{cases}
\end{equation}
where the effect of kernel integral is just a simple integral average.

\end{definition}

Based on the kernel definition in \Cref{kernel,ANO}, we have the following lemma.
\begin{lemma}\label{link}
The integral kernel in \Cref{kernel}  reduces to that of ANO \Cref{ANO}  under suitable choices of the feature maps.
\end{lemma}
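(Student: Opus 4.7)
The plan is to exhibit explicit choices of the learnable parameters $\bm{W}_q, \bm{W}_k, \bm{W}_v$ together with the feature maps $\phi_q, \phi_k$ that reduce the agent-attention kernel $\kappa(x,y) = \langle \varphi_q(x), \varphi_k(y)\rangle \bm{W}_v$ of (\ref{kernel}) to the constant identity kernel implicit in the ANO update (\ref{ANO}). The key observation is that the ANO integral term $\int_\Omega \bm{v}_t(y)\, dy$ corresponds to the particular kernel $\kappa_{\mathrm{ANO}}(x,y) \equiv I_{d\times d}$, independent of both arguments. Thus the lemma reduces to finding parameters under which $\langle \varphi_q(x), \varphi_k(y)\rangle \bm{W}_v = I_{d\times d}$ for all $x, y \in \Omega$.

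First, I would choose $\bm{W}_q = \bm{W}_k = 0$ in (\ref{map}), which forces the arguments of $\phi_q$ and $\phi_k$ to be identically zero regardless of the input $\bm{f}$. Consequently $\varphi_q(x) \equiv \phi_q(0)$ and $\varphi_k(y) \equiv \phi_k(0)$ are fixed vectors in $\mathbb{R}^M$, independent of the spatial coordinates. From the construction of $\phi_q, \phi_k$ through the softmax factorization in (\ref{AT}), evaluation at the zero input produces uniform distributions over the $M$ agent slots, so the inner product $c := \langle \phi_q(0), \phi_k(0) \rangle$ is a well-defined strictly positive scalar (of order $1/M$).

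Second, I would set $\bm{W}_v = c^{-1} I_{d\times d}$. Substituting into (\ref{kernel}) gives $\kappa(x,y) \equiv I_{d\times d}$, and the integral operator (\ref{NO}) collapses exactly to $\mathcal{T}(\bm{v}_t)(x) = \int_\Omega \bm{v}_t(y)\, dy$. Combined with the unchanged residual contribution $W \bm{v}_t(x)$ and the activation $\sigma$, this recovers the ANO update rule (\ref{ANO}) verbatim, thereby identifying the agent-attention family with ANO under the chosen specialization.

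The main delicate point is bookkeeping the precise softmax normalization so that $c$ is genuinely a data-independent positive constant that can be absorbed into $\bm{W}_v$; one should also verify that the rescaled matrix remains a legitimate element of the learnable class $\mathbb{R}^{d\times d}$. Once these checks are completed, no approximation, density, or limiting argument is required: the reduction is an exact specialization of the parametric family, so no analytic obstruction arises and the lemma follows as a compatibility statement between the two operator classes.
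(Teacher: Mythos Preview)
Your approach is correct and follows essentially the same idea as the paper: both render $\varphi_q,\varphi_k$ constant so that the inner product collapses to a fixed positive scalar absorbed into $\bm{W}_v$. The paper does this slightly more abstractly by directly prescribing $\varphi_q\equiv|\Omega|^{-1}\mathbf{e}_1$, $\varphi_k\equiv\mathbf{e}_1$, $\bm{W}_v=I_d$, whereas you achieve constancy via $\bm{W}_q=\bm{W}_k=0$ and then rescale $\bm{W}_v$.
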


\begin{proof}
Let $\varphi_q, \varphi_k : \Omega \to \mathbb{R}^M$ be feature maps
parametrized by neural networks, associated with weights $\bm{W}_q$ and $\bm{W}_k$,
and let $\bm{W}_v$ denotes the value weights (cf. \Cref{kernel,map}).
If the feature maps are chosen to be constant, namely
\begin{equation}
\varphi_q(x) \equiv |\Omega|^{-1} \mathbf{e}_1,
\qquad
\varphi_k(x) \equiv \mathbf{e}_1,
\qquad
W_v \equiv I_d,
\end{equation}
where $\mathbf{e}_1=\underbrace{(1,0,...,0)}_{M}$,
then the kernel simplifies to
\begin{equation}
\kappa(x,y) = |\Omega|^{-1} I_d,
\end{equation}
which results in
\begin{equation}
\label{NO2}
\mathcal{T}(\bm{f})(x) \equiv \int_{\Omega} \bm{f}(y)\, dy,
\end{equation}
which is exactly the kernel expression \Cref{ANO} in ANO .
\end{proof}

% \usepackage{enumitem} % 导言区加入
% \begin{}
%     1
% \end{rema}

\begin{myremark}

1) In practice, the parameters $\bm{\varphi_q}$, $\bm{\varphi_k}$, $\bm{W_v}$ are trainable
    and therefore may not constant; hence the learned kernel should be regarded
    as an generalization of the ANO kernel;
            2) Since the kernel in LANO can be reduced to that of in ANO, the universal approximation results can be established for the LANO (see \Cref{thm:universal})  with trainable kernels.

\end{myremark}

\subsection{Linear attention neural operator}

Based on the agent attention mechanism described above, we propose the linear attention neural operator (LANO).
The overall architecture of LANO follows a structure similar to previous designs \cite{li2020fourier,kovachki2021universal,wu2024transolver}, consisting of three main stages: an encoder, a processor, and a decoder (see \Cref{Flow} and \Cref{alg:agent_attention} for details):
\begin{itemize}
  \vspace{0.05in}  \item \textbf{Encoder.} Raw input features including position coordinates and function values (if any) are first passed through a shared point-wise multilayer perceptron (MLP), producing high-dimensional embeddings denoted as $\mathbf{f}^{(0)}$. This step expands the representational capacity of the input with $\mathbf{f}^{(0)}\in\mathbb{R}^{N\times d}$.

  \vspace{0.05in}  \item \textbf{Processor.} The lifted features are subsequently processed by $L$  agent token based self-attention blocks. Each block performs two successive updates:
    \begin{align}\label{Agent}
        \mathbf{f}^{(l')} &= \mathbf{f}^{(l)} + \text{Agent-Attn}(\text{LayerNorm}(\mathbf{f}^{(l)})), \\
        \mathbf{f}^{(l+1)} &= \mathbf{f}^{(l')} + \text{FFN}(\text{LayerNorm}(\mathbf{f}^{(l')})),
    \end{align}
    where $l\in\{0,...,L-1\}$, $\text{FFN}(\cdot)$ denotes a feed-forward network, and
    a pre-norm \cite{xiong2020layer} is used here.

\vspace{0.05in} \item \textbf{Decoder.} A linear layer is employed to project $\mathbf{f}^{(L)} \in \mathbb{R}^{N \times d}$ to the target output dimension $d_u$. For time-dependent systems, an auto-regressive strategy is adopted to generate sequential predictions.

\end{itemize}

\begin{figure}[!t]
  \centering
\includegraphics[width=0.69\textwidth]{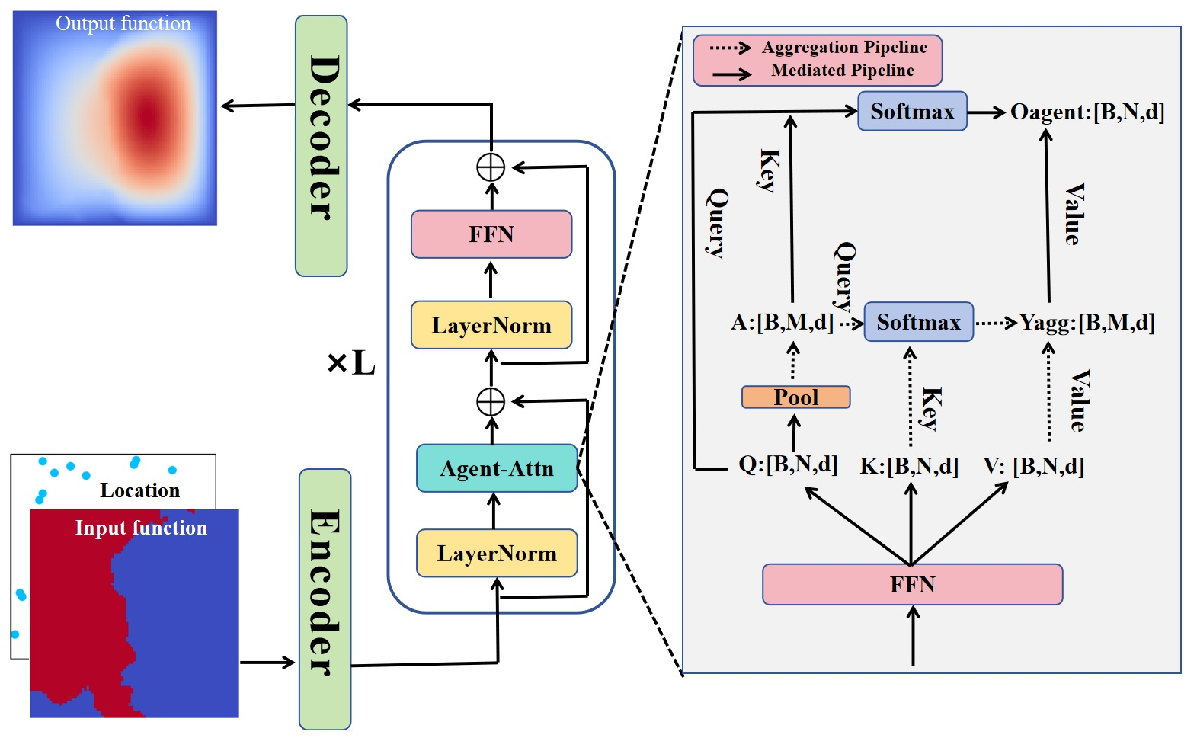}
 \vspace{0.1in} \caption{The architecture of LANO incorporates an agent attention block, as illustrated on the right-hand side. In this block, the query, key, and value matrices $\mathbf{Q}, \mathbf{K}, \mathbf{V} \in \mathbb{R}^{N \times d}$ are first obtained. A pooling operation is then applied to $\mathbf{Q}$ to derive the agent token $\mathbf{A} \in \mathbb{R}^{M \times d}$.
In the aggregation flow, $\mathbf{A}$ serves as the query, $\mathbf{K}$ as the key, and $\mathbf{V}$ as the value. In the mediated flow, $\mathbf{Q}$ acts as the query, $\mathbf{A}$ as the key, and the output from the aggregation flow as the value. Both attention flows follow the standard scaled dot-product softmax attention mechanism. }
\label{Flow}
\end{figure}

\begin{algorithm}
\caption{Linear Attention Neural Operator (LANO) solving PDEs}
\label{alg:agent_attention}
\begin{algorithmic}[1]
\REQUIRE Input coordinates $\mathbf{x} \in \mathbb{R}^{N \times d_x}$, function values $a(\mathbf{x})\in \mathbb{R}^{N\times d_a}$ (optional)
\ENSURE Predicted solution $\hat{u}(\mathbf{x})\in \mathbb{R}^{N \times d_u}$

\STATE Initialize network parameters $\theta$;
\STATE Initialize learning rate $\eta_0$, total epochs $E$;
\FOR{$i = 1$ \TO $E$}
    \STATE \textbf{Forward Pass:}
    \STATE $\mathbf{f}^{(0)} \gets \text{MLP}([\mathbf{x}, a(\mathbf{x})])$;
    \FOR{$l = 0$ \TO $L-1$}
        \STATE $\mathbf{f}^{(l')} \gets \mathbf{f}^{(l)} + \text{Agent-Attn}(\text{LayerNorm}(\mathbf{f}^{(l)}))$;
        \STATE $\mathbf{f}^{(l+1)} \gets \mathbf{f}^{(l')} + \text{FFN}(\text{LayerNorm}(\mathbf{f}^{(l')}))$;
    \ENDFOR
    \STATE $\hat{u}(\mathbf{x}) \gets \text{MLP}(\mathbf{f}^{(L)})$;

    \STATE \textbf{Loss Computation:}
    \STATE $\mathcal{L} \gets \frac{\|\hat{u}(\mathbf{x}) - u_{\text{true}}(\mathbf{x})\|_2}{\|u_{\text{true}}(\mathbf{x})\|_2}$;

    \STATE \textbf{Backward Pass:}
    \STATE Compute gradients $\nabla_\theta \mathcal{L}$;
    \STATE $\eta_i \gets \eta_0 \cdot \text{LearningRateSchedule}(i, E)$;
    \STATE Update $\theta$ using AdamW with learning rate $\eta_i$;
\ENDFOR

\RETURN $\hat{u}(\mathbf{x})$
\end{algorithmic}
\end{algorithm}

\begin{myremark}\label{SC}
By choosing $\sigma = \mathrm{Id}$, $W = I$, and the kernel function $\kappa_{\theta}$ as defined in \Cref{kernel}, the LANO introduced here reduces to a special case of the neural operator presented in \Cref{IT}.
\end{myremark}

\subsection{Universal Approximation of LANO}

Based on the previously defined LANO, we can establish the following universal approximation theorem:
\begin{theorem}[{\bf Universal Approximation of LANO}]\label{thm:universal}
Let $\Omega \subset \mathbb{R}^{d_x}$ be a bounded domain with Lipschitz boundary.
Let $s_1, s_2 \ge 0$ be integers, and let $p_1, p_2 \in [1, \infty)$ be given.
Suppose that
\[
G^\dagger : W^{s_1,p_1}(\Omega; \mathbb{R}^{k_1}) \to W^{s_2,p_2}(\Omega; \mathbb{R}^{k_2})
\]
is a continuous operator.

Furthermore, let $K$ be a compact subset of $W^{s_1,p_1}(\Omega; \mathbb{R}^{k_1})$ such that all functions in $K$ are uniformly bounded in the $L^\infty$ norm.
In other words, there exists a constant $C > 0$ satisfying
\[
\|a\|_{L^\infty(\Omega)} \le C \quad \text{for every } a \in K.
\]
Then, for any $\varepsilon > 0$, there exists a LANO
\begin{equation}
G_{\theta} : W^{s_1,p_1}(\Omega; \mathbb{R}^{k_1}) \to W^{s_2,p_2}(\Omega; \mathbb{R}^{k_2})
\end{equation}
such that
\begin{equation}
\sup_{a \in K} \|G^\dagger(a) - G_{\theta}(a)\|_{W^{s_2,p_2}} \leq \varepsilon.
\end{equation}
In other words, LANO is capable of approximating any continuous operator between the specified Sobolev spaces to arbitrary precision over compact sets of bounded functions.
\end{theorem}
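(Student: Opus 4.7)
The plan is to reduce the universal approximation problem for LANO to the already-known universal approximation theorem for ANO, using \Cref{link} as the bridge. The strategic observation is that LANO strictly generalizes ANO: the agent-attention kernel in \Cref{kernel} collapses to the ANO averaging kernel when the feature maps $\varphi_q,\varphi_k$ are frozen to constant vectors and $\bm{W}_v = I_d$. Hence any approximation result available for ANO transfers automatically to LANO by parameter specialization, and the universal approximation theorem cited from the work of Lanthaler becomes the workhorse of the proof.

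First, I would invoke the universal approximation theorem for ANO under compatible hypotheses on $\Omega$, the Sobolev indices $(s_i,p_i)$, and the compact set $K$. This yields an ANO $G_\theta^{\mathrm{ANO}}$ built from the iterative update \Cref{ANO} that approximates $G^\dagger$ uniformly on $K$ in the $W^{s_2,p_2}$-norm to within $\varepsilon$. Second, I would construct a LANO $G_{\theta}$ whose encoder and decoder reproduce the lifting operator $\mathcal{L}$ and projection operator $\mathcal{P}$ of the target ANO, whose residual branch realizes the local linear term $W\bm{v}_t(x)$, and whose agent-attention block instantiates the averaging kernel via the constant-feature-map construction of \Cref{link} (as already noted in \Cref{SC}). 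The LayerNorm, FFN, and skip connections appearing in \Cref{Agent} can be neutralized or absorbed by standard parameter choices (e.g.\ identity FFN, trivial affine LayerNorm parameters) so that the $L$-block LANO coincides pointwise with the $T$-step ANO iteration, giving $\sup_{a \in K}\|G^\dagger(a) - G_{\theta}(a)\|_{W^{s_2,p_2}} \leq \varepsilon$ as required.

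The main obstacle I anticipate is the discretization gap raised implicitly by Theorem~3.1: the agent-attention computes a Monte-Carlo quadrature of the continuous integral operator rather than the operator itself, so reproducing the ANO continuous average on a finite mesh requires care. In the constant-kernel regime of \Cref{link}, the empirical mean is already an exact average up to the $|\Omega|^{-1}$ factor, which can be absorbed into $\bm{W}_v$, so pointwise consistency is not the issue. The delicate point is obtaining a \emph{uniform} quadrature error bound in the stronger $W^{s_2,p_2}$-norm (not merely $L^p$) over the compact set $K$. I expect this step to proceed by combining the continuity of $G^\dagger$ with the equicontinuity of the image family $\{G^\dagger(a):a\in K\}$ in $W^{s_2,p_2}$ (via the compactness of $K$ and the uniform $L^\infty$ bound), together with a mollification or smooth-quadrature argument that controls Sobolev-norm errors of the Monte-Carlo approximation uniformly in $a$. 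Once this uniform discretization bound is in place, combining it with the ANO universal approximation theorem via the triangle inequality closes the argument.
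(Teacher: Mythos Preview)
Your high-level strategy---reduce to the ANO universal approximation theorem via \Cref{link} and then realize the resulting ANO inside a LANO---is sound and is precisely the route the paper's own Remark preceding \Cref{SC} advertises. The paper's actual proof, however, does \emph{not} invoke the ANO result as a black box. Instead it works constructively: first it decomposes $G^\dagger$ on $K$ as a finite sum $\sum_{j=1}^n \varphi_j(a)\,\psi_j$ with continuous scalar functionals $\varphi_j$ and fixed profiles $\psi_j\in W^{s_2,p_2}$ (this is \Cref{A1}, quoted from the Lanthaler reference); then it builds, for each term, an explicit LANO that uses only a pointwise MLP encoder, a single averaging processor layer, and an MLP decoder (Lemmas~A2 and~A3); finally it concatenates these in parallel. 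Your reduction is shorter but relies on the external ANO theorem being stated with exactly matching hypotheses and on LANO simulating an arbitrary multi-layer ANO; the paper's route is more self-contained and sidesteps the architectural gymnastics by using a minimal encoder--average--decoder LANO rather than the full block of \Cref{Agent}.

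Your third paragraph, on the other hand, is chasing a non-issue. The theorem concerns the \emph{continuous} operator $G_\theta$ between Sobolev spaces; no finite mesh appears in the statement. Theorem~3.1 merely interprets the discrete agent-attention matrices as a Monte-Carlo quadrature of the continuous kernel integral---it is a modeling remark, not a step in the approximation proof, which lives entirely at the continuous level (the processor is the integral operator $\mathcal{T}$, not a sum over sample points). There is no quadrature error to control in $W^{s_2,p_2}$, and the equicontinuity/mollification machinery you outline is unnecessary. The only architectural care actually required---neutralizing LayerNorm, the skip connections, and the FFN so that the processor collapses to the pure average---you already flagged correctly in your second paragraph; drop the discretization detour and your argument is complete.
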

% The proof of Theorem~\ref{thm:universal} is provided in {\bf Appendix~\Cref{AA}}.
\begin{proof} We give the detailed proof in {\bf Appendix A}. Here we only give a simple proof.
    By  \Cref{A1} in Sobolev spaces, there exist $\psi_1,\dots,\psi_n
\in W^{s_2,p_2}(\Omega;\mathbb{R}^{k_2})$,
and continuous nonlinear functionals,
$\varphi_1,\dots,\varphi_n:L^1(\Omega;\mathbb{R}^{k_1})\to\mathbb{R}$ such that
\[
T(a):=\sum_{j=1}^n \varphi_j(a)\,\psi_j
\quad\text{satisfies}\quad
\sup_{a\in K}\ \|G^\dagger(a)-T(a)\|_{W^{s_2,p_2}}\ \le\ \frac{\varepsilon}{2}.
\]
For each $j$, apply \Cref{A1} with  accuracy
$\varepsilon/(2n)$ to obtain LANO blocks
$G_\theta^{(j)}:L^1(\Omega;\mathbb{R}^{k_1})\to W^{s_2,p_2}(\Omega;\mathbb{R}^{k_2})$ with
\[
\sup_{a\in K}\ \|\varphi_j(a)\psi_j-G_\theta^{(j)}(a)\|_{W^{s_2,p_2}}\ \le\ \frac{\varepsilon}{2n}.
\]
Define the overall LANO by parallel concatenation and summation in the decoder:
\[
G_\theta(a):=\sum_{j=1}^n G_\theta^{(j)}(a).
\]
Then
\begin{align*}
\sup_{a\in K}\|G^\dagger(a)-G_\theta(a)\|_{W^{s_2,p_2}}
&\le \sup_{a\in K}\|G^\dagger(a)-T(a)\|_{W^{s_2,p_2}}
 + \sup_{a\in K}\|T(a)-G_\theta(a)\|_{W^{s_2,p_2}} \\
&\le \frac{\varepsilon}{2}
   + \sum_{j=1}^n \sup_{a\in K}
   \|\varphi_j(a)\psi_j-\Psi_j(a)\|_{W^{s_2,p_2}} \\
&\le \frac{\varepsilon}{2} + n\cdot \frac{\varepsilon}{2n}
= \varepsilon.
\end{align*}
\end{proof}

\section{Numerical experiments}
\label{Num}
Our assessment of LANO spans diverse discretization regimes and problem domains. \Cref{tab:benchmarks} lists several benchmarks: five widely used physics problems in solid mechanics and fluid mechanics-Elasticity, Plasticity, Airfoil, Pipe, and Darcy flow introduced in the FNO/geo-FNO lines of work \cite{li2020fourier,li2023fourier}. The settings span 2D/3D point clouds, regular grids, and structured/unstructured meshes.  For specific implementation details, please refer to {\bf Appendix B}.

\begin{table}[H]
  \centering
    \caption{Benchmarks used in our numerical experiments.}
  \label{tab:benchmarks}
  \setlength{\tabcolsep}{8pt}
  \renewcommand{\arraystretch}{1.15}
  \begin{tabular}{lllc}
    \toprule
    \textbf{Physics} & \textbf{Benchmarks} & \textbf{Geometry} & \textbf{\#Dim} \\
    \midrule
    \multirow{2}{*}{\text{Solid Mechanics}}
      & Elasticity & Point Cloud     & 2D \\
      & Plasticity   & Structured Mesh & 2D+1D (Time) \\
    \midrule
    \multirow{3}{*}{\text{Fluid Mechanics}}
      & Airfoil        & Structured Mesh & 2D \\
      & Pipe           & Structured Mesh & 2D \\
            & Darcy Flow          & Regular Grid    & 2D \\
    \bottomrule
  \end{tabular}
  \end{table}

\begin{table}
\centering
\caption{Performance comparison of neural operators on solid and fluid mechanics benchmarks (LANO vs. baselines). The best results are highlighted in \textbf{bold with dark background},
and the second-best results are \underline{underlined with light background}. We report promotion as the percentage error reduction relative to the
second-best model, calculated as
$1 - \tfrac{\text{Our error}}{\text{Second-best error}}$.
A slash (``/'') denotes benchmarks on which the baseline method
is  not applicable.}
\small
\renewcommand{\arraystretch}{1.12}
\resizebox{\linewidth}{!}{%
\begin{tabular}{lccccc} % ← 去掉了竖线
\toprule
\multirow{3}{*}{Model}
& \multicolumn{2}{c}{Solid Mechanics}
& \multicolumn{3}{c}{Fluid Mechanics} \\
\cmidrule(r){2-3}\cmidrule(l){4-6} % ← 用左右trim形成分组间空隙
& Elasticity & Plasticity
& Airfoil & Pipe  & Darcy \\
& $(\times 10^{-2})$ & $(\times 10^{-2})$
& $(\times 10^{-2})$ & $(\times 10^{-2})$   & $(\times 10^{-2})$ \\
\midrule
\multicolumn{6}{l}{\bfseries\scshape Spectral-based}\\[0.8ex]
FNO \cite{li2020fourier}        & /    & /    & /    & /    &  1.08 \\
WMT \cite{gupta2021multiwavelet}& 3.59 & 0.76 & 0.75 & 0.77  & 0.82 \\
U{-}FNO \cite{wen2022u}         & 2.39 & 0.39 & 2.69 & 0.56  & 1.83 \\
geo{-}FNO \cite{li2023fourier}  & 2.29 & 0.74 & 1.38 & 0.67  & 1.08 \\
U{-}NO \cite{rahman2022u}       & 2.58 & 0.34 & 0.78 & 1.00  & 1.13 \\
F{-}FNO \cite{tran2021factorized}& 2.63 & 0.47 & 0.78 & 0.70  & 0.77 \\
LSM \cite{wu2023solving}        & 2.18 & 0.25 & 0.59 & 0.50  & 0.65 \\
\midrule
\multicolumn{6}{l}{\bfseries\scshape Transformer-based}\\[0.8ex]
Galerkin \cite{cao2021choose}   & 2.40 & 1.20 & 1.18 & 0.98  & 0.84 \\
HT{-}Net \cite{liu2024mitigating}& /    & 3.33 & 0.65 & 0.59  & 0.79 \\
OFormer \cite{li2022transformer}& 1.83 & 0.17 & 1.83 & 1.68  & 1.24 \\
GNOT \cite{hao2023gnot}         & 0.86 & 3.36 & 0.76 & 0.47  & 1.05 \\
FactFormer \cite{li2023scalable}& /    & 3.12 & 0.71 & 0.60  & 1.09 \\
ONO \cite{xiao2023improved}     & 1.18 & 0.48 & 0.61 & 0.52  & 0.76 \\
Transolver \cite{wu2024transolver}
& \textit{0.64} & \textit{0.12} & \textit{0.53} & \textit{0.33} & \textit{0.57} \\
\midrule
\textbf{LANO (ours)} & \textbf{0.40} & \textbf{0.11} & \textbf{0.40} & \textbf{0.31} & \textbf{0.45} \\
Relative Promotion   & 37.5\%      & 8.3\%       & 24.5\%      & 6.1\%            & 21.1\% \\
\bottomrule
\end{tabular}%
}
\label{tab:main-benchmarks}
\end{table}

\subsection{Solid Mechanics}
The motion of solid materials is governed by \cite{li2023fourier}
\begin{equation}
\rho_s \frac{\partial^2 u}{\partial t^2} + \nabla \cdot \sigma = 0,
\end{equation}
where $\rho_s \in \mathbb{R}$ denotes the solid density, $\nabla$ is the nabla operator,
$u$ is the displacement vector depending on time $t$, and $\sigma$ is the stress tensor.
All benchmarks, namely \textbf{Elasticity}  and \textbf{Plasticity} \cite{li2023fourier},
are based on this governing equation.

%\begin{itemize}
 %   \item \textbf{Elasticity.}
 \subsubsection{Elastic problem}   The elastic problem considers an incompressible solid with a cavity at its center,
    subjected to external tensile loading.
    The objective is to reconstruct the stress field inside the material.
    The input is the specimen geometry, while the output is the internal stress distribution.
    In this benchmark, the geometry is discretized as a point cloud with $972$ sampling points. Our dataset consists of 1,200 samples in total, with 1,000 used for training and the remaining 200 for testing.

 The quantitative results in the first column of the solid mechanics benchmark
(\Cref{tab:main-benchmarks}) reveal that LANO attains a substantially closer
match to the ground-truth solution than the previous state-of-the-art,
Transolver. In particular, LANO achieves a relative improvement in predictive
accuracy of $37.5\%$, underscoring its superior expressive capacity and fidelity
in modeling the underlying physical phenomena.

 In addition to these numerical comparisons, we also provide a qualitative
assessment on a representative test case, illustrated in \Cref{Elas}.
The visualization presents the ground-truth solution, the input geometry,
predictions from both Transolver and LANO, and the corresponding absolute
error distributions. The comparison highlights that LANO consistently produces
markedly smaller errors, thereby validating its robustness and enhanced
generalization ability across complex geometric configurations.

\subsubsection{Plastic problem}
    The plastic case models a forging process where a plastic workpiece is pressed by an arbitrarily shaped die from above.
    The die geometry, given on a mesh, serves as the input.
    The task is to predict the deformation of all mesh nodes over $20$ future time steps.
    The benchmark uses a structured mesh with a resolution of $101 \times 31$.   Our dataset consists of 980 samples in total, with 900 used for training and the remaining  for testing.

 On the plasticity benchmark, reported in the second column of
\Cref{tab:main-benchmarks}, LANO demonstrates a tangible step forward
relative to Transolver. Although the observed improvement in accuracy
amounts to a modest $8.3\%$, this gain is particularly significant given
the challenge of modeling path-dependent plastic flow, where small errors
in early steps can accumulate rapidly.

 To complement the quantitative comparison, \Cref{Plas} showcases
a representative forging case. The figure contrasts the ground-truth
displacement field with the predictions produced by both Transolver and
LANO, alongside their absolute error maps. LANO succeeds in capturing
subtle deformation features, resulting
in a more reliable long-horizon forecast of the material response.
This evidences the framework's ability to handle strongly nonlinear and
irreversible processes with improved stability.

%\end{itemize}

\begin{figure}[htbp]
  \centering
  \includegraphics[width=0.9\textwidth]{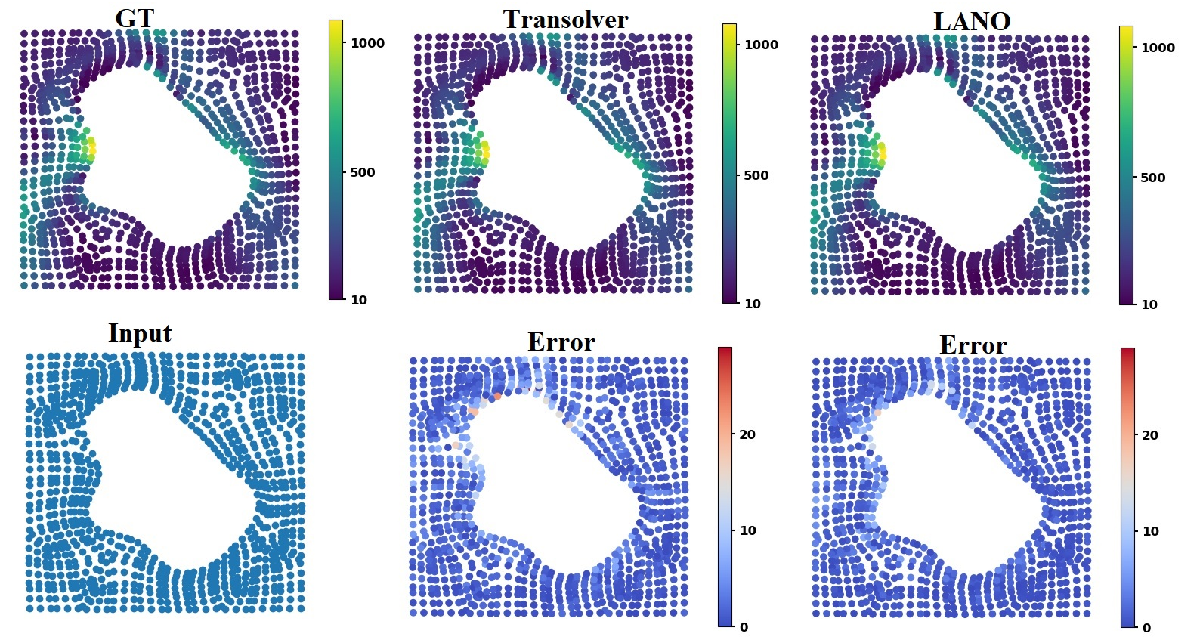}
  \vspace{0.1in}\caption{Qualitative comparison of model performance on the Elasticity benchmark.
Ground truth (GT), Transolver and LANO predictions, along with the corresponding input geometry and absolute error fields.  }
\label{Elas}
\end{figure}

\begin{figure}[htbp]
  \centering
  \includegraphics[width=0.9\textwidth]{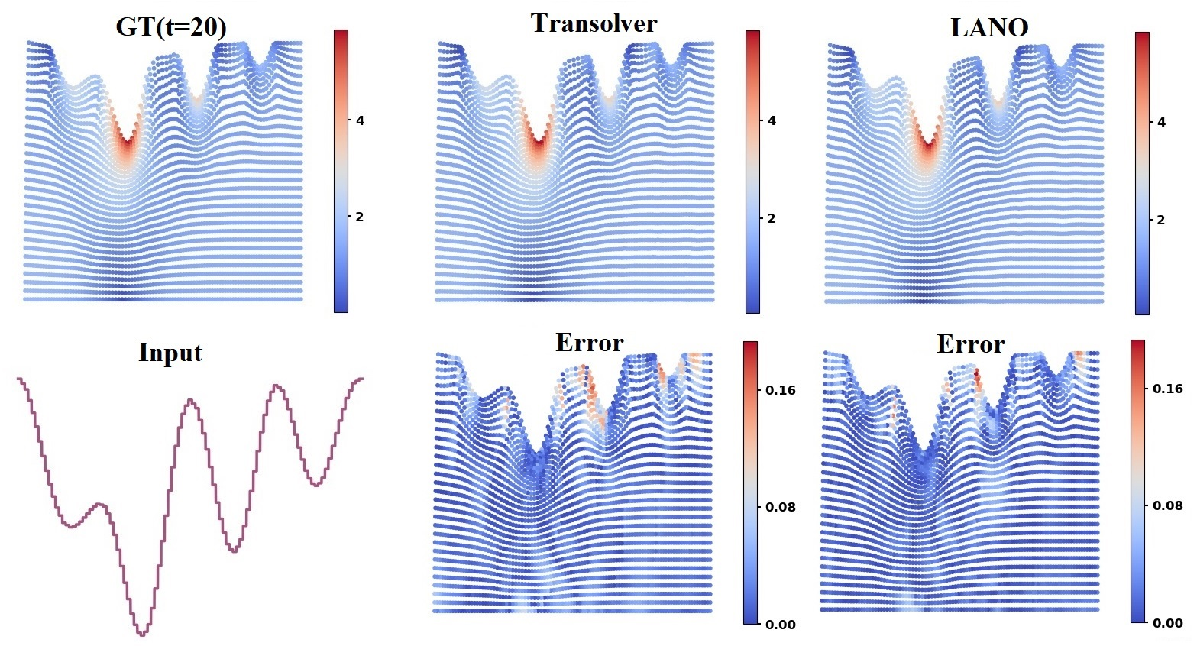}
  \vspace{0.1in}\caption{Qualitative comparison of model performance on the Plasticity benchmark.
Ground truth (GT), Transolver and LANO predictions at $t=20$, along with the corresponding input die geometry and absolute error fields.  }
\label{Plas}
\end{figure}

\subsection{Fluid Mechanics}

The dynamics of a Newtonian viscous fluid is governed by the Navier--Stokes equations \cite{batchelor2000introduction,wendt2008computational},
which express conservation of mass and momentum.

\paragraph{Mass equation:}
\begin{equation}\label{ME2}
\frac{\partial \rho}{\partial t} + \nabla \cdot (\rho \mathbf{u}) = 0,
\end{equation}
with $\rho$ the density and $\mathbf{u}$ the velocity.

\paragraph{Momentum equation:}
\begin{equation}\label{ME}
\frac{\partial (\rho \mathbf{u})}{\partial t}
+ \nabla \cdot (\rho \mathbf{u}\otimes \mathbf{u})
= - \nabla p + \nabla \cdot \boldsymbol{\tau} + \rho \mathbf{f},
\end{equation}
where $p$ is the pressure, $\mathbf{f}$ the body force, and the viscous stress tensor is
\begin{equation}\label{EE}
\boldsymbol{\tau} = \mu \!\left[ \nabla \mathbf{u} + (\nabla \mathbf{u})^{T} - \tfrac{2}{3}(\nabla \cdot \mathbf{u}) \mathbf{I} \right]
+ \lambda (\nabla \cdot \mathbf{u}) \mathbf{I},
\end{equation}
with $\mu$ the dynamic viscosity, $\lambda$ the bulk viscosity, and $\mathbf{I}$ the identity tensor. For many fluids, the Stokes hypothesis suggests $\lambda = -\frac{2}{3}\mu$.

\paragraph{Energy equation:}
\begin{equation}
\frac{\partial E}{\partial t}
+ \nabla \cdot \big[(E+p)\mathbf{u}\big]
= \nabla \cdot (\boldsymbol{\tau}\mathbf{u}) - \nabla \cdot \mathbf{q} + \rho\,\mathbf{f}\!\cdot\!\mathbf{u},
\end{equation}
where $E=\rho e+\tfrac{1}{2}\rho|\mathbf{u}|^{2}$ is the total energy density, $e$ the internal energy, and the heat flux obeys Fourier's law $\mathbf{q}=-k\nabla T$.
Equivalently, in internal-energy form,
\begin{equation}
\frac{\partial (\rho e)}{\partial t}
+ \nabla \cdot (\rho e\,\mathbf{u})
= -p\,\nabla\!\cdot\!\mathbf{u} + \boldsymbol{\tau}:\nabla\mathbf{u}
- \nabla \cdot \mathbf{q} + \rho\,\mathbf{f}\!\cdot\!\mathbf{u},
\end{equation}
with ``$:$'' denoting the tensor double contraction.

Two of the three benchmarks, namely \textbf{Airfoil}  and \textbf{Pipe} \cite{li2023fourier},
are derived from specialized forms of the fluid dynamics equations tailored to specific physical scenarios:
\begin{itemize}
\vspace{0.05in}    \item \textbf{Airfoil}: Compressible inviscid flow described by the Euler equations, neglecting viscous effects for high-Reynolds number transonic flow.

    \vspace{0.05in}\item \textbf{Pipe}: Incompressible viscous flow using the primitive variable formulation of the Navier-Stokes equations, capturing wall-bounded viscous effects.
\end{itemize}

\vspace{0.1in}

The \textbf{Darcy flow} benchmark models fluid transport through porous media, governed by Darcy's law which describes low-velocity flow through materials such as groundwater permeating sand layers. We employ the two-dimensional Darcy flow dataset introduced by \cite{li2020fourier}, where the governing equations defined on a unit square domain are expressed as:
\begin{equation}\label{DE}
\left\{\begin{aligned}
-\nabla \cdot (a(\mathbf{x}) \nabla u(\mathbf{x})) &= f(\mathbf{x}), \quad \mathbf{x} \in (0,1)^2, \\[0.5em]
u(\mathbf{x}) &= 0, \quad \mathbf{x} \in \partial(0,1)^2.
\end{aligned}\right.
\end{equation}
Here, $a(\mathbf{x}) $ represents the spatially-varying diffusion coefficient (permeability field) characterizing the porous medium's conductivity, while $f(\mathbf{x})$ denotes the external forcing term. The unknown field $u(\mathbf{x})$ corresponds to the hydraulic pressure distribution within the domain, with homogeneous Dirichlet boundary conditions prescribed on all boundaries.

\begin{figure}[htbp]
  \centering
  \includegraphics[width=0.9\textwidth]{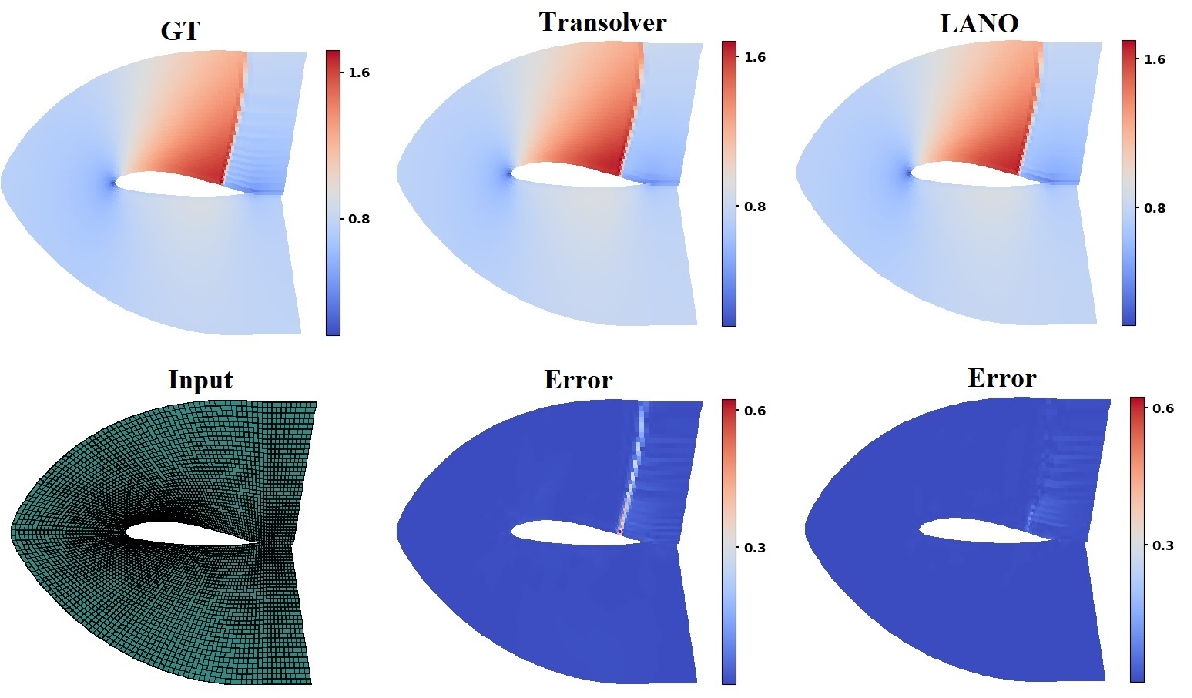}
 \vspace{0.1in} \caption{Qualitative comparison of model performance on the Airfoil benchmark.
Ground truth (GT), Transolver and LANO predictions, along with the corresponding input airfoil geometry and absolute error fields.  }
\label{Airfoil}
\end{figure}

%\begin{itemize}

\subsubsection{Airfoil problem}
We employ the transonic airfoil dataset introduced by \cite{li2023fourier},
which investigates compressible flow past parameterized airfoil geometries.
Since the dynamic viscosity $\mu$ (equivalently, the kinematic viscosity $\nu=\mu/\rho$) of air is small,
the viscous stress term $\nabla\!\cdot\!\boldsymbol{\tau}$ can be neglected, and external body forces $\rho\,\mathbf{f}$ are set to zero (see \Cref{ME,EE}).
Under these assumptions, the governing equations reduce to the compressible, inviscid, force-free Euler system,
\begin{equation}
\left\{\begin{aligned}
\frac{\partial \rho}{\partial t}
+ \nabla \cdot (\rho \mathbf{u}) = 0, \\[3pt]
\frac{\partial (\rho \mathbf{u})}{\partial t}
+ \nabla \cdot \!\big(\rho \mathbf{u}\otimes \mathbf{u} + p\mathbf{I}\big) = 0, \\[3pt]
\frac{\partial E}{\partial t}
+ \nabla \cdot \!\big((E+p)\mathbf{u}\big) = 0,
\end{aligned}
\label{eq:euler}\right.
\end{equation}
where $\rho$ is the density, $\mathbf{u}$ the velocity, $p$ the pressure, $\mathbf{I}$ the identity tensor, and $E$ the total energy density. In transonic regimes, local supersonic pockets can develop, and the nonlinearity of the governing equations implies that shock waves may form.

 The predictive task is to estimate the Mach number distribution conditioned on the input airfoil geometry.
Each airfoil shape is represented on a structured mesh of size $221 \times 51$.
The geometries are generated through smooth deformations of the canonical NACA-0012 profile published by the National Advisory Committee for Aeronautics, thereby ensuring a consistent baseline while providing significant geometric variability.
The dataset consists of $1200$ distinct airfoil configurations, of which $1000$ are designated for training and the remaining $200$ are held out for testing.

 The first column of the fluid dynamics benchmark (\Cref{tab:main-benchmarks}) indicates that LANO surpasses Transolver with a notable margin. In this setting, LANO achieves a $24.5\%$ relative gain in predictive accuracy, which demonstrates its improved capability to resolve the highly nonlinear characteristics of compressible flow. Such an enhancement is particularly significant in transonic regimes, where the emergence of supersonic pockets leads to the formation of shock waves.

 To complement the numerical evidence, \Cref{Airfoil} provides a visual comparison on a representative example. The figure juxtaposes the ground-truth flow field, the underlying geometry, and the predictions of both solvers, along with their absolute error distributions. The results clearly show that LANO not only reduces error amplitudes across the domain but also captures shock structures with higher fidelity, underscoring its robustness and superior generalization performance in complex aerodynamic configurations.
\begin{figure}[htbp]
  \centering
  \includegraphics[width=0.9\textwidth]{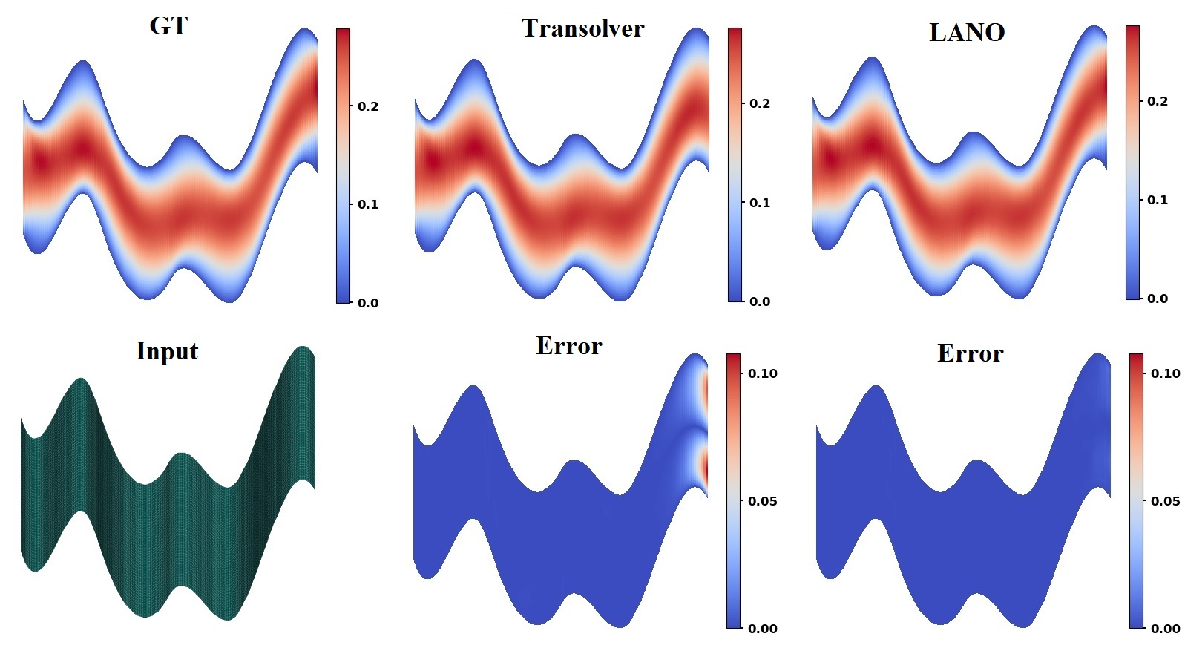}
\vspace{0.1in}  \caption{Qualitative comparison of model performance on the Pipe benchmark.
Ground truth (GT), Transolver and LANO predictions, along with the corresponding input pipe geometry and absolute error fields.  }
\label{Pipe}
\end{figure}

\subsubsection{Pipe problem}
We then consider the incompressible flow inside a pipe, following \cite{li2023fourier}.
The fluid dynamics is governed by the incompressible Navier--Stokes system \Cref{ME2,ME},
\begin{equation}
\nabla \cdot \mathbf{u} = 0,
\qquad
\frac{\partial \mathbf{u}}{\partial t}
+ \mathbf{u}\cdot\nabla \mathbf{u}
= \mathbf{f} - \frac{1}{\rho}\nabla p + \nu \nabla^{2}\mathbf{u},
\label{eq:pipe-ns}
\end{equation}
where $\mathbf{u}$ is the velocity field, $\rho$ the density, $p$ the pressure, $\nu$ the kinematic viscosity, and $\mathbf{f}$ the body force.

 The computational domain is discretized on a structured grid of size $129\times129$.
In the dataset, the mesh coordinates are used as inputs, and the outputs are defined as the horizontal component of the velocity field within the pipe.
This setup provides a benchmark task for learning incompressible fluid dynamics on structured grids.
The dataset consists of $1200$ distinct airfoil configurations, of which $1000$ are designated for training and the remaining $200$ are held out for testing.

 In the pipe-flow benchmark (\Cref{tab:main-benchmarks}), LANO demonstrates clear advantages over Transolver. The method improves predictive accuracy by $6.5\%$, a gain that directly reflects its ability to resolve the near-wall dynamics that dominate incompressible internal flows. Unlike the bulk region, where velocity variations are relatively smooth, the boundary layer adjacent to the pipe walls introduces steep gradients that pose significant challenges for learning-based solvers. LANO's performance indicates that it can more reliably capture these localized features.

 A qualitative comparison is provided in \Cref{Pipe}. The figure contrasts the ground-truth velocity field with the outputs of both models and visualizes the corresponding error distributions. The results show that Transolver's predictions deviate most strongly in boundary-layer regions, while LANO better preserves the velocity profile across the entire cross-section and reduces wall-induced errors. These observations suggest that LANO is particularly well-suited for scenarios where accurate representation of boundary-layer phenomena is essential for predictive fidelity.
\begin{figure}[htbp]
  \centering
  \includegraphics[width=0.9\textwidth]{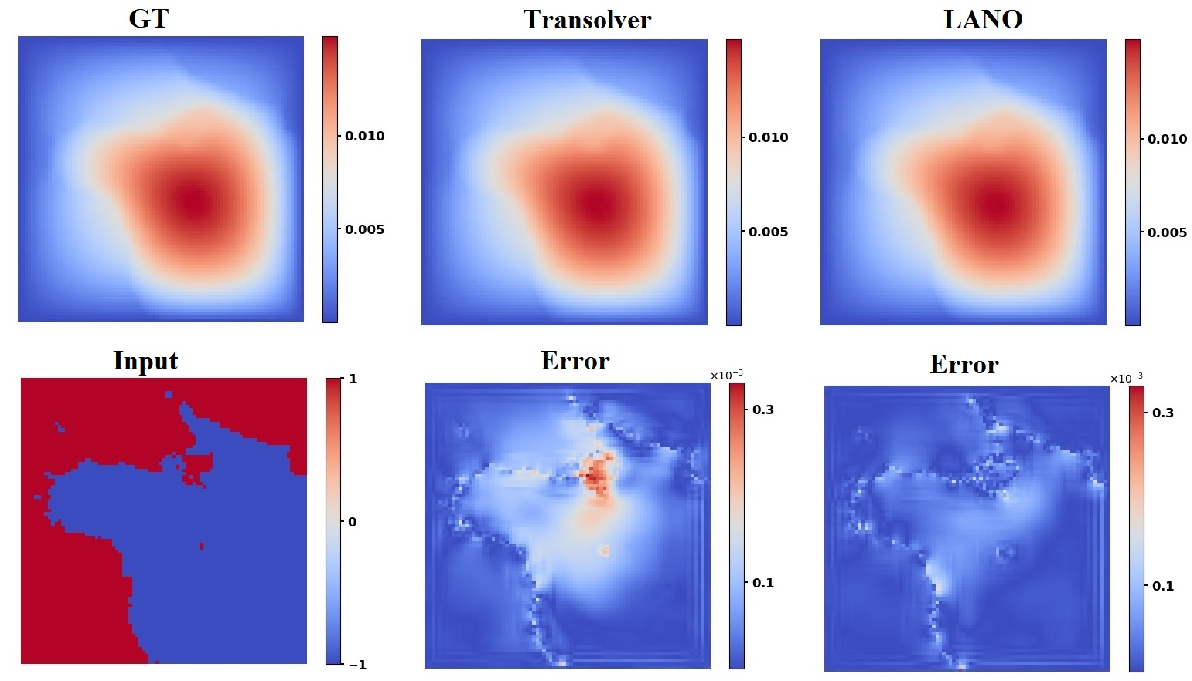}
  \vspace{0.1in}\caption{Qualitative comparison of model performance on the Darcy flow benchmark.
Ground truth (GT), Transolver and LANO predictions, along with the corresponding input porous medium and absolute error fields.   }
\label{Darcy}
\end{figure}

\subsubsection{Darcy flow problem}
We consider the steady flow of an incompressible fluid through a porous medium, governed by Darcy's law \Cref{DE}~\cite{li2020fourier}.
The computational domain is originally discretized on a uniform $421\times421$ grid, which is subsequently downsampled to an $85\times85$ resolution for the main experiments.
Each sample is defined by a heterogeneous coefficient field $a(x)$ that encodes the spatial structure of the porous medium.
Given this coefficient field as input, the learning task is to predict the corresponding pressure distribution on the grid.
The dataset comprises $1200$ instances in total, with $1000$ samples allocated for training and $200$ reserved for testing.
Different cases feature distinct realizations of $a(x)$, thereby introducing strong variability in the medium structure and providing a stringent benchmark for evaluating the robustness of PDE solvers under heterogeneous conditions.

 In the Darcy benchmark (\Cref{tab:main-benchmarks}), LANO achieves a $21.1\%$ improvement in predictive accuracy over Transolver. This gain highlights its ability to cope with the sharp spatial contrasts introduced by the heterogeneous coefficient field $a(x)$. Such irregularities generate localized discontinuities in the solution that are notoriously difficult for learning-based solvers to approximate. LANO demonstrates greater robustness in these regions, yielding more reliable pressure reconstructions across highly variable porous media.

 The qualitative comparison in \Cref{Darcy} further substantiates this observation. The input coefficient map, ground-truth pressure, predictions, and error distributions are presented side by side. Transolver exhibits large deviations along material interfaces where
$a(x)$ changes abruptly, while LANO produces closer agreement with the reference and substantially reduces error concentrations. These results indicate that LANO provides a distinct advantage for elliptic PDE problems characterized by heterogeneous coefficients, where faithfully representing medium-induced variability is essential.

%\end{itemize}
\begin{table}[H]
\centering\label{ComTa}
\caption{Relative L$^2$ errors for different numbers of agent  tokens (in units of $10^{-3}$). The optimal value is shown in \textbf{bold}.}
\resizebox{\textwidth}{!}{
\begin{tabular}{cccccc}
\toprule
Number of agent tokens & Elasticity & Plasticity & Airfoil & Pipe & Darcy \\
\midrule
8    & 4.71 & 1.08 & 4.40 & 4.11& 6.29  \\
16   & 4.42 & 1.04 & 4.23 & 4.15 & 5.74  \\
32   & 5.32 & 1.13 & 3.94 & 3.48 & 5.19 \\
64   & 4.03 & 1.12 & \textbf{3.93} & 3.28 & 4.80  \\
128  & 3.63 & 1.11 & 3.99 & 3.12 & 4.51  \\
256  & \textbf{3.54} & \textbf{1.03} & 4.03 & \textbf{3.02} & \textbf{4.24} \\
\bottomrule
\end{tabular}
}
\end{table}

% \begin{figure}[htbp]
%   \centering
%   \includegraphics[width=0.95\textwidth]{Pic/AM}
%   \caption{Scaling behavior of relative prediction error with respect to agent number $M$ across benchmark problems.  }
% \label{Com}
% \end{figure}
% \\[1em]

\subsection{Model Analysis}

\begin{itemize}
  \vspace{0.1in}  \item \textbf{Effect of agent token number $M$.}
    The agent tokens, obtained via Q-pooling, serve as condensed representations of the input field
and their number $M$ directly determines the expressive bandwidth of the model.
A larger $M$ provides more ``degrees of freedom'' for the network to attend to diverse regions of the solution domain,
effectively enriching its capacity to resolve multi-scale structures.
The scaling results in \Cref{ComTa} reveal several noteworthy trends.

\quad First of all, tasks with pronounced local complexity such as the \emph{Pipe} benchmark,
dominated by sharp boundary-layer gradients near the walls, and the \emph{Darcy} benchmark,
characterized by heterogeneous permeability fields, exhibit clear performance gains as $M$ increases.
In these cases, additional tokens allow the model to partition the domain into finer effective regions,
leading to improved reconstruction of high-gradient or spatially varying solution features.
Secondly, for benchmarks like \emph{Airfoil} and \emph{Plasticity},
the accuracy saturates beyond moderate values of $M$, and in some instances even oscillates.
This indicates that the dominant physical phenomena in these problems
(e.g., shock waves in transonic airfoil flow or localized yielding in plasticity)
can already be captured with relatively few tokens, and further increases primarily introduce redundancy.

\quad It is worthy to mention that increasing $M$ does not significantly increase the number of learnable parameters.
Since the agent tokens are derived through Q-pooling rather than learned independently,
only a small set of bias-related parameters scale with $M$, and this overhead is negligible compared to the full model size.
Thus, enlarging $M$ provides a practical way to improve expressivity with little impact on parameter count or memory footprint.

\begin{figure}[htbp]
  \centering
  \includegraphics[width=0.85\textwidth]{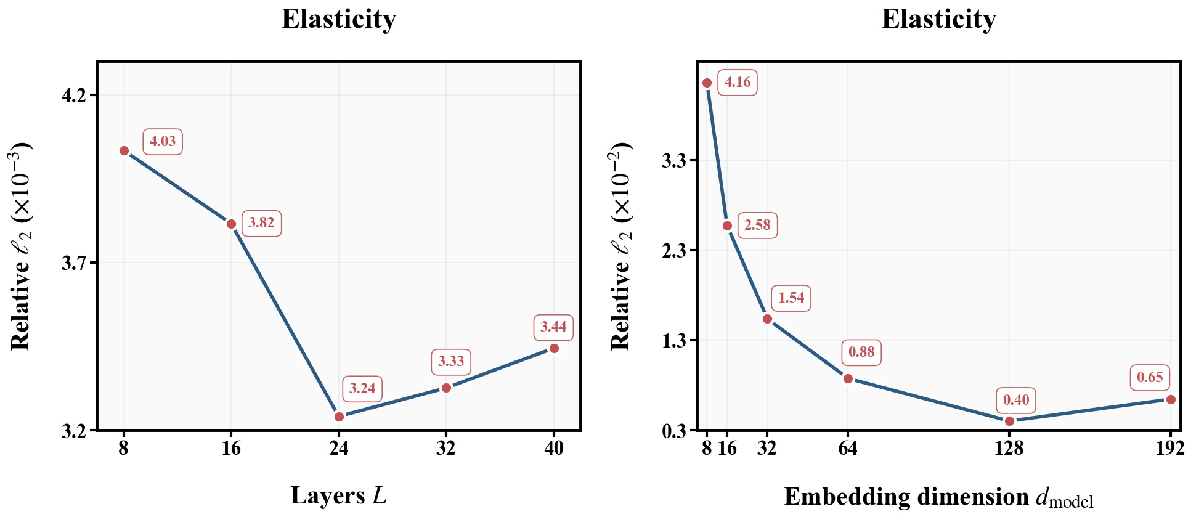}
\vspace{0.1in}  \caption{
Scalability study on the \emph{Elasticity} benchmark.
Left: effect of transformer depth (L layers).
Right: effect of embedding dimension ($d_{\text{model}}$).
Note the different scales of the vertical axes: relative $\ell_2$ error is shown in $10^{-3}$ (left)
and $10^{-2}$ (right).
}
\label{Sc}
\end{figure}

\vspace{0.1in} \item \textbf{Model scalability.}
 As shown in \Cref{Sc}, enlarging the model capacity through either deeper transformers ($L$)
or higher embedding dimensions ($d_{\text{model}}$) initially reduces the prediction error.
For instance, increasing $L$ from $8$ to $24$ layers yields a clear improvement,
and raising $d_{\text{model}}$ up to $128$ provides substantial gains.
However, beyond these turning points, the benefits quickly diminish and the curves either flatten or even reverse.
This behavior indicates that the model capacity has outpaced the information content of the dataset:
once the underlying solution manifold is sufficiently captured,
further enlarging $L$ or $d_{\text{model}}$ does not translate into better generalization.
Instead, the additional parameters are underutilized and may even destabilize training.

\quad These observations suggest that the primary bottleneck is no longer representational power,
but data availability and diversity.
In other words, the Elasticity dataset becomes saturated with respect to model size,
and future improvements would require richer or larger training data rather than continued scaling of architectural parameters.\\[-0.5em]

\item \textbf{Discretization Convergence.}
 Discretization-convergence is a fundamental property for surrogate models \cite{kovachki2021universal}.
It ensures that the learned approximation remains consistent under mesh refinement and transferable across
different discretization strategies. Concretely, this property encompasses two aspects:
(i) as the discretization is refined, the model predictions converge, analogous to the behavior of classical numerical solvers;
and (ii) the same set of learned parameters can zero-shot generalize across varying resolutions or discretization schemes
without retraining. This dual perspective motivates the design of neural operators as mappings between function spaces,
rather than resolution-specific models tied to a single grid.
For completeness, we provide the following definition.
\begin{figure}[!t]
  \centering
\includegraphics[width=0.85\textwidth]{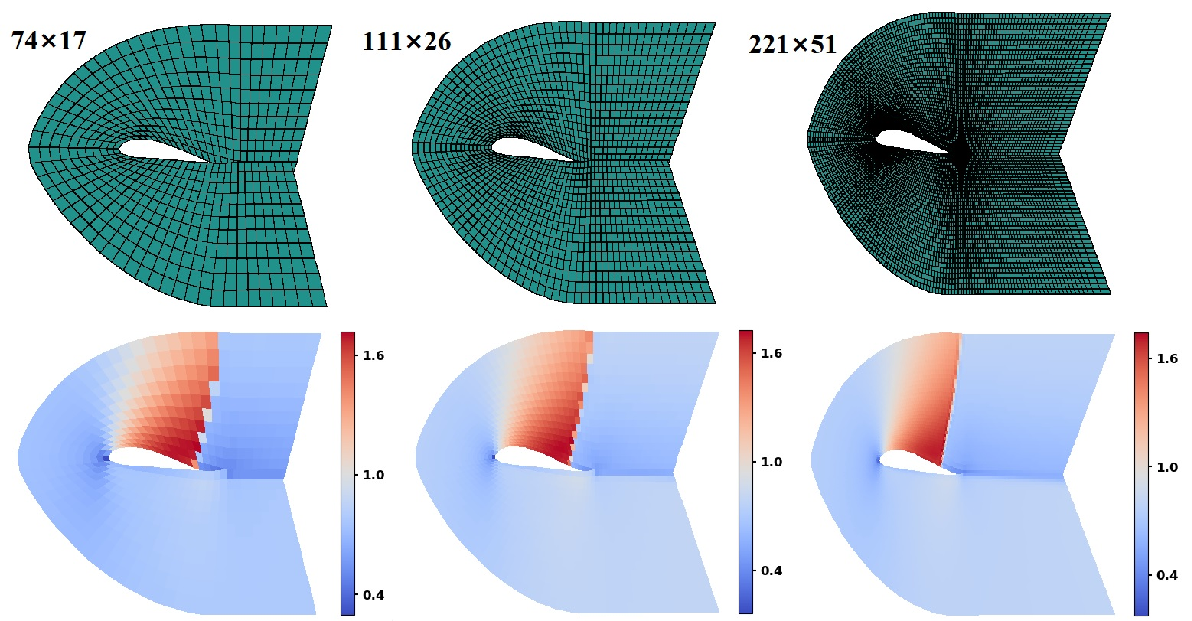}
\vspace{0.1in}  \caption{
Discretization-convergence study on the \emph{Airfoil} benchmark.
The top row shows the input airfoil geometries discretized with coarse (74$\times$17),
medium (111$\times$26), and fine (221$\times$51) meshes.
The bottom row presents the corresponding Mach number fields, where the
LANO-predicted  distributions are displayed.
The relative $\ell_2$ errors for the three resolutions are $5.72e-03
$, $5.04e-03
$, and $3.98e-03
$, respectively.
}
\label{Conv}
\end{figure}

\quad \begin{definition}
    \small Discretization-Invariant Parameterized Operator Family \cite{kovachki2021universal}.

Let \( \Omega \subset \mathbb{R}^{d_x} \) be the domain, \( A \) a Banach space of \( \mathbb{R}^{d_a} \)-valued functions, and \( U \) a normed space. Let \( \Theta \subset \mathbb{R}^p \) be a finite-dimensional parameter space, and \( G_{\Theta}: A  \to U \) a parametric family of operators. Let \( (\Omega_N)_{N=1}^\infty \) be a sequence of discretizations of \( \Omega \), with \( \Omega_N \) containing \( N \) points. For each \( N \), let \( G_{\Theta,N}: \mathbb{R}^{Nd_x} \times \mathbb{R}^{Nd_a}  \to U \) be a discretized map.

For any compact set \( K \subset A \) and \( \theta \in \Theta \), the discretized uniform risk is defined as
\[
R_{K,N}(\theta) := \sup_{a \in K} \left\| G_{\Theta,N}(\Omega_N, a|_{\Omega_N}, \theta) - G_{\Theta}(a) \right\|_U.
\]
The family \( G_\Theta \) is \textit{discretization-invariant} if
\[
\lim_{N \to \infty} R_{K,N}(\theta) = 0 \quad \forall \, \theta \in \Theta, \; \forall \, K \subset A.
\]
In this case, we call \( G_\Theta \) a \textit{discretization-invariant parameterized operator family (DIPOF)}.
\end{definition}

\quad Based on the definition of DIPOF, we observe the results from the discretization convergence study on the Airfoil benchmark. The top row of \Cref{Conv} displays the input airfoil geometries discretized using coarse (\(74 \times 17\)), medium (\(111 \times 26\)), and fine (\(221 \times 51\)) meshes. The bottom row presents the corresponding Mach number fields, with the LANO-predicted distributions shown.

\quad The relative \( \ell_2 \) errors for the three resolutions are \( 5.72 \times 10^{-3} \), \( 5.04 \times 10^{-3} \), and \( 3.98 \times 10^{-3} \), respectively. As the resolution increases, the relative error decreases, which demonstrates that the LANO model exhibits discretization-invariant behavior, satisfying the condition
\[
\lim_{N \to \infty} R_{K,N}(\theta) = 0,
\]
where \( R_{K,N}(\theta) \) denotes the discretized uniform risk. These results suggest that as the mesh resolution improves, the model's predictions become more accurate, indicating convergence toward a discretization-invariant operator.

\end{itemize}

\begin{table}[H]
\centering
\caption{Comparison of Transolver and LANO on zero-shot results, with parameters and performance across different resolutions. Bold values indicate the best performance in each column.}
\begin{tabular}{ccccc}
\toprule
Model & Parameters & Train resolution & Test resolution & Test resolution \\
 & & on \( 111 \times 26 \) & on \( 221 \times 51 \) & on \( 74 \times 17 \) \\
\hline
Transolver & 3.074M & 5.06e-03 & 6.22e-02 & \textbf{6.46e-02} \\
\hline
LANO (ours) & \textbf{1.104M} & \textbf{5.04e-03} & \textbf{6.04e-02} & 6.82e-02 \\
\bottomrule
\end{tabular}
\label{tab:model_comparison}
\end{table}

As shown in Table \ref{tab:model_comparison}, we compare the performance of Transolver and our proposed method, LANO, in zero-shot tasks across three different resolutions. The table also includes the number of parameters for each model, which is an important factor when considering the computational efficiency of these models. LANO, with only 1.104M parameters, demonstrates competitive performance across all test resolutions, outperforming Transolver, which has 3.074M parameters, at the training resolution \( 111 \times 26 \)  as indicated by the bolded values. Specifically, LANO achieves a relative error of \( 5.04 \times 10^{-3} \) on the training resolution, compared to Transolver's \( 5.06 \times 10^{-3} \), and a relative error of \( 6.04 \times 10^{-2} \) on the test resolution \( 221 \times 51 \), which is lower than Transolver's \( 6.22 \times 10^{-2} \).
This highlights that LANO provides better performance in terms of accuracy with fewer parameters, which is particularly valuable in practical scenarios where model efficiency is crucial. However, at the coarsest resolution (\( 74 \times 17 \)), Transolver outperforms LANO, achieving a relative error of \( 6.46 \times 10^{-2} \) compared to LANO's \( 6.82 \times 10^{-2} \). This result suggests that LANO may be slightly less effective when dealing with the coarser grid, possibly due to its smaller parameter set not capturing certain details as well as Transolver at lower resolutions.

The comparison of Transolver and LANO reveals a trade-off between model size and performance. LANO achieves competitive zero-shot performance with significantly fewer parameters, making it an attractive choice when model size and computational efficiency are a priority. However, the increased error at the coarsest resolution suggests that LANO may not yet fully leverage its capacity at lower resolutions. Overall, LANO demonstrates the potential for more efficient models that maintain strong performance across different resolutions, though there may be room for improvement in handling the coarsest grids more effectively.

\section{Conclusions}
\label{sec:conclusions}
In this work, we have proposed the Linear Attention Neural Operator (LANO), a novel architecture designed to overcome the fundamental scalability-accuracy trade-off that has constrained transformer-based models in scientific computing. LANO achieves this through an innovative agent-based attention mechanism, which replaces the quadratic-cost global self-attention with a lightweight yet highly expressive bidirectional communication protocol between the original token space and a compact set of agent tokens.

Theoretical analysis and extensive empirical evaluations demonstrate that LANO delivers on its dual promise: it achieves linear complexity $\mathcal{O}(MNd)$ with respect to the number of grid points $N$, while matching or even surpassing the accuracy of state-of-the-art models reliant on slice-based softmax attention. This breakthrough establishes LANO as a scalable and high-fidelity foundation for learning solution operators of complex PDEs.

Looking forward, the LANO framework opens up several promising research avenues. The principled design of the agent layer invites further investigation into adaptive strategies for determining the optimal number and even the dynamic evolution of agents for specific problem classes. Furthermore, the efficiency of LANO makes it an ideal backbone for large-scale scientific machine learning tasks, including uncertainty quantification \cite{birmpa2021uncertainty,jung2024bayesian}, inverse problem solving \cite{tenderini2022pde,zhu2024error}, and long-term dynamical forecasting. We believe that the paradigm of agent-mediated interactions will prove invaluable in scaling neural operators to the demanding real-world problems that have hitherto been beyond the reach of data-driven solvers.

\vspace{0.2in}
\noindent {\bf Acknowledgments}

\vspace{0.05in}
This work was partially supported by the  National Key Research and Development Program of China (No. 2024YFA1013101) and the National Natural Science Foundation of China (No.12471242).

% and 11971475).

\addcontentsline{toc}{section}{Appendix A}

\setcounter{equation}{0}
\renewcommand{\theequation}{A.\arabic{equation}}

	\vspace{0.2in}
\noindent {\bf Appendix A.\, Detailed Proof of Theroem 3}\label{AA}
The proof of Theroem 3.6
based on the following lemmas.

\begin{lemma}\cite{lanthaler2025nonlocality}\label{A1}
Considering a continuous mapping
\begin{equation}\label{eq:psi-map}
    G^\dagger : W^{s_1,p_1}(\Omega; \mathbb{R}^{k_1})
    \;\longrightarrow\; W^{s_2,p_2}(\Omega; \mathbb{R}^{k_2}).
\end{equation}
Let $K \subset W^{s_1,p_1}(\Omega; \mathbb{R}^{k_1})$ be compact.
Then, for every $\varepsilon > 0$, there exist finitely many functions
\begin{equation}\label{eq:psi-def}
    \psi_1, \ldots, \psi_n \in W^{s_2,p_2}(\Omega; \mathbb{R}^{k_2})
\end{equation}
together with continuous nonlinear functional
\begin{equation}\label{eq:varphi-def}
    \varphi_1, \ldots, \varphi_n : L^1(\Omega; \mathbb{R}^{k_1}) \to \mathbb{R},
\end{equation}
such that the following approximation property holds:
\begin{equation}\label{eq:approx}
    \sup_{a \in K}
    \Big\| G^\dagger(a) - \sum_{j=1}^n \varphi_j(a)\,\psi_j \Big\|_{W^{s_2,p_2}}
    \leq \varepsilon.
\end{equation}

\end{lemma}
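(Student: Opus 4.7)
The plan is to reduce the approximation of $G^\dagger$ on the compact set $K$ to a standard partition-of-unity argument in the target space, and then to lift the resulting real-valued coefficients from $K$ to continuous functionals on the larger ambient space $L^1(\Omega;\mathbb{R}^{k_1})$ via a Tietze/Dugundji-type extension. The guiding picture is that any continuous map from a compact set into a Banach space is uniformly approximable by a convex combination of finitely many fixed ``templates'' in the target space; the templates will play the role of the $\psi_j$, and the convex-combination weights (after extension to the ambient $L^1$) will play the role of the $\varphi_j$.

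First, I would exploit compactness of $K$ together with continuity of $G^\dagger$ to conclude that the image $G^\dagger(K)\subset W^{s_2,p_2}(\Omega;\mathbb{R}^{k_2})$ is compact, and hence totally bounded. A standard $\varepsilon/2$-net argument then yields finitely many $\psi_1,\dots,\psi_n\in W^{s_2,p_2}$ such that every element of $G^\dagger(K)$ lies within distance $\varepsilon/2$ of some $\psi_j$. Next, I would build a continuous partition of unity on $K$ subordinate to the open sets $U_j:=\{a\in K:\|G^\dagger(a)-\psi_j\|_{W^{s_2,p_2}}<\varepsilon\}$, for instance via the bump functions
\[
\tau_j(a):=\max\bigl\{0,\;\varepsilon-\|G^\dagger(a)-\psi_j\|_{W^{s_2,p_2}}\bigr\},\qquad \rho_j(a):=\tau_j(a)\Big/\sum_{k=1}^{n}\tau_k(a).
\]
By construction each $\rho_j:K\to[0,1]$ is continuous, $\sum_j\rho_j\equiv 1$ on $K$, and $\rho_j(a)>0$ only when $\|G^\dagger(a)-\psi_j\|_{W^{s_2,p_2}}<\varepsilon$. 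Convexity of the norm then immediately gives $\|G^\dagger(a)-\sum_j\rho_j(a)\psi_j\|_{W^{s_2,p_2}}\le\varepsilon$ uniformly in $a\in K$, which is the desired approximation except that the weights are still only defined on $K$.

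The final, and most delicate, step is to extend each $\rho_j$ from $K$ to a continuous functional $\varphi_j$ defined on all of $L^1(\Omega;\mathbb{R}^{k_1})$. Since the embedding $W^{s_1,p_1}\hookrightarrow L^1$ is continuous, $K$ remains compact (hence closed) in $L^1$, and the continuous bijection $\mathrm{id}:(K,\|\cdot\|_{W^{s_1,p_1}})\to(K,\|\cdot\|_{L^1})$ between a compact space and a Hausdorff one is a homeomorphism; in particular the $\rho_j$ are automatically $L^1$-continuous on $K$. Dugundji's extension theorem, applied to the closed subset $K$ of the metric space $L^1$, then yields continuous extensions $\varphi_j:L^1(\Omega;\mathbb{R}^{k_1})\to\mathbb{R}$ with $\varphi_j|_K=\rho_j$, which combined with Step~2 gives the stated bound. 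I expect the main obstacle to lie precisely in this topological transfer: one must carefully justify that the $W^{s_1,p_1}$- and $L^1$-topologies agree on $K$ (which is what makes $W^{s_1,p_1}$-continuity of $G^\dagger$ translate into $L^1$-continuity of the $\rho_j$), and that Dugundji's extension is legitimately applicable to a closed, generally non-convex subset of the Banach space $L^1$. Both facts are classical but hinge in an essential way on the compactness assumption on $K$; without it neither the finite $\varepsilon$-net nor the $L^1$-continuity of the extended coefficient functionals would survive.
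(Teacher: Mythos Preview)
Your proposal is correct. The paper does not actually prove Lemma~\ref{A1}; it is quoted verbatim from \cite{lanthaler2025nonlocality} and used as a black box in the proof of Theorem~\ref{thm:universal}. So there is no ``paper's own proof'' to compare against here.

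That said, your argument is the standard one and goes through cleanly. The three ingredients---total boundedness of $G^\dagger(K)$ to produce the templates $\psi_j$, a normalized bump-function partition of unity on $K$ to obtain convex weights $\rho_j$, and a Tietze/Dugundji extension of the $\rho_j$ from $K$ to all of $L^1$---are exactly the right pieces, and you have correctly identified the one subtle point: the $\rho_j$ are a priori only $W^{s_1,p_1}$-continuous on $K$, and one needs the compact--Hausdorff homeomorphism argument (via the continuous embedding $W^{s_1,p_1}\hookrightarrow L^1$ on a bounded domain) to upgrade this to $L^1$-continuity before extending. The only implicit hypothesis you are using that is not written in the lemma statement itself is that $\Omega$ is bounded (needed for $L^{p_1}\hookrightarrow L^1$); this is, however, part of the standing assumptions of Theorem~\ref{thm:universal} in the paper, so it is legitimate in context.
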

Based on \Cref{A1}, we next approximate the nonlinear functional $\varphi_j$ and the function $\psi_j$ by means of LANO.

\begin{lemma}\label{A2}
Let $\varphi : L^{1}(\Omega; \mathbb{R}^{k_1}) \to \mathbb{R}$ be a continuous nonlinear functional,
and let $K \subset L^{1}(\Omega; \mathbb{R}^{k_1})$ be a compact set whose elements are uniformly bounded in $L^\infty$, i.e.,
\[
\sup_{a \in K} \|a\|_{L^\infty(\Omega)} < \infty.
\]
Then, for every $\varepsilon > 0$, there exists a LANO of the form
\[
G_{\theta}(a)=\mathcal{D}\circ\mathcal{P}_L\circ\cdots\circ\mathcal{P}_1\circ\mathcal{E},
\]
whose outputs are constant functions on $\Omega$ (hence also viewed as scalars), such that
\[
\sup_{a \in K} \, \big| \varphi(a) - G_{\theta}(a) \big| \leq \varepsilon.
\]
\end{lemma}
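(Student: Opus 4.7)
The overall plan is to factor the approximation of $\varphi$ through a finite-dimensional bottleneck, and then exhibit LANO components that (i) compute the bottleneck features via agent-mediated averaging, and (ii) evaluate a scalar function of these features via an MLP that is broadcast as a constant output.

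First I would reduce the problem to finite dimensions. Since $K$ is compact in $L^{1}(\Omega;\mathbb{R}^{k_1})$ and $\varphi$ is continuous, a standard mollification/partition argument produces, for any $\delta>0$, finitely many continuous linear functionals $\ell_1,\dots,\ell_m:L^1(\Omega;\mathbb{R}^{k_1})\to\mathbb{R}$ of the form $\ell_r(a)=\int_\Omega g_r(x)\cdot a(x)\,dx$ with $g_r\in C(\overline\Omega;\mathbb{R}^{k_1})$, together with a continuous map $F:\mathbb{R}^m\to\mathbb{R}$, such that
\[
\sup_{a\in K}\bigl|\varphi(a)-F(\ell_1(a),\dots,\ell_m(a))\bigr|\le\varepsilon/2.
\]
This is essentially an application of the compact-set version of the Stone-Weierstrass/moment approximation argument used in the proof of Lemma A1, specialized to a scalar target. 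The uniform $L^\infty$ bound on $K$ together with continuity of $\varphi$ guarantees that $F$ may be taken continuous on a compact box containing $(\ell_1(a),\dots,\ell_m(a))$ for all $a\in K$.

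Next I would realize each feature $\ell_r(a)$ through a single agent-attention block. By Lemma \ref{link} and the preceding discussion, the agent-mediated kernel produces, at every token $x$, an expression of the form $\sum_i\langle\varphi_q(x),\varphi_k(y_i)\rangle\,W_v\bm{f}(y_i)$, which is a Monte-Carlo approximation to $\int_\Omega \langle\varphi_q(x),\varphi_k(y)\rangle\,W_v\bm{f}(y)\,dy$. Choosing $\varphi_q$ constant and letting $\varphi_k$ encode the coefficient $g_r$ (approximated to arbitrary accuracy by the MLP that parametrizes $\varphi_k$, via the universal approximation of MLPs on the compact domain $\overline\Omega$), and taking $W_v$ to select the appropriate component of $\bm{f}=\mathcal{E}(a,x)$, this block writes $\ell_r(a)$ (up to any prescribed tolerance) into a dedicated channel of the feature tensor, identically at every token. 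Running $m$ such heads (or using $m$ channels within one block) in parallel produces a feature field that is constant in $x$ and equals $(\ell_1(a),\dots,\ell_m(a))$ plus an error that can be made smaller than any preassigned threshold uniformly over $K$, using the uniform $L^\infty$ bound to control the Monte-Carlo discretization error.

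Then I would apply a pointwise MLP, which fits naturally inside the FFN of the next processor block (or the decoder $\mathcal{D}$), to approximate $F$ on the compact set of realized feature vectors within accuracy $\varepsilon/2$, by the classical universal approximation theorem for MLPs. Since the input channels are constant over $\Omega$, the output is a constant scalar field on $\Omega$, as required. A triangle-inequality combination of the two errors yields
\[
\sup_{a\in K}\bigl|\varphi(a)-G_\theta(a)\bigr|\le\varepsilon/2+\varepsilon/2=\varepsilon.
\]
The main obstacle I anticipate is the quantitative control of the Monte-Carlo step: one must verify that the discretization error in realizing $\int_\Omega g_r(y)\cdot a(y)\,dy$ by a sum over the $N$ mesh points is uniform over the compact set $K$. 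The uniform $L^\infty$ bound on $K$, combined with continuity of the kernels $g_r$, reduces this to a standard quadrature estimate on a Lipschitz domain, but it is the step that ties together the encoder discretization, the agent-attention kernel representation, and the continuity of $\varphi$, so it deserves the most care.
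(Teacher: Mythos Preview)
Your strategy---factor $\varphi$ through finitely many linear functionals of the form $\int_\Omega g_r\cdot a$ and a continuous $F$, realize the integrals via a LANO block that outputs a constant field, then approximate $F$ by an MLP in the decoder---is exactly the paper's route. The paper carries out the finite-dimensional reduction more explicitly (mollification followed by $L^2$ projection onto a basis $\{\chi_j\}$, yielding coefficients $b_j(a)=\frac{1}{|\Omega|}\int_\Omega a\cdot(\chi_j*\rho_\eta)$), but the skeleton is identical.

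Two points are worth correcting. First, the paper does \emph{not} encode the weights $g_r$ in the attention kernel via $\varphi_k$; instead it pushes all the work into the encoder, which computes the pointwise products $R_j(a(x),x)\approx a(x)\cdot g_j(x)$, and then takes the processor to be the pure ANO averaging layer $\frac{1}{|\Omega|}\int_\Omega(\cdot)\,dy$ (this is precisely the reduction in Lemma~\ref{link}). Your variant of putting $g_r$ into $\varphi_k$ is plausible but less clean, because $\varphi_k$ is built from softmax-normalized maps and depends on the lifted feature $f(y)$ rather than directly on $y$; the encoder route sidesteps this entirely.

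Second, your ``main obstacle''---controlling a Monte-Carlo error over $N$ mesh points---is a non-issue here. The universal approximation statement concerns the \emph{continuous} LANO, whose processor is the exact integral $\frac{1}{|\Omega|}\int_\Omega z(y)\,dy$, not a discrete sum. Theorem~1 in the paper explains how discrete agent attention approximates this integral, but that discretization plays no role in the proof of Lemma~\ref{A2}. Dropping that concern, your argument goes through as written.
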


\begin{proof}
Set $M:=\sup_{a\in K}\|a\|_{L^\infty(\Omega)}<\infty$.

\medskip
%\noindent
\textbf{Step 1: Smooth approximation via convolution.}
Let $\rho\in C_c^\infty(\mathbb{R}^{d_x})$ with $\int_{\mathbb{R}^{d_x}}\rho=1$, and define $\rho_\eta(x):=\eta^{-d_x}\rho(x/\eta)$, $\eta>0$.
Extend $a(x)$ by $0$ outside $\Omega$ and set $S_\eta[a]:=(a*\rho_\eta)|_{\Omega}$.
Since $K$ is compact in $L^1(\Omega)$ and $S_\eta\to\mathrm{Id}$ strongly on $L^1$,
\[
\lim_{\eta\to0}\ \sup_{a\in K}\ \|a-S_\eta[a]\|_{L^1(\Omega)}=0.
\]
Hence, by uniform continuity of $\varphi$ on $K$, there exists $\eta_0>0$ such that for all $0<\eta\le\eta_0$,
\[
\sup_{a\in K}\ |\varphi(a)-\varphi(S_\eta[a])|\ \le\ \frac{\varepsilon}{3}.
\]

\medskip
%\noindent
\textbf{Step 2: Finite-dimensional projection.}
Let $\{\chi_j\}_{j\ge1}$ be an orthonormal basis of $L^2(\Omega;\mathbb{R}^{k_1})$ and define
\[
P_d[v]:=\sum_{j=1}^{d}\langle v,\chi_j\rangle_{L^2(\Omega)}\,\chi_j.
\]
Because $K\subset L^1\cap L^\infty$ is uniformly $L^\infty$-bounded and compact in $L^1$, it is relatively compact in $L^2$; hence $S_\eta[K]$ is $L^2$-compact. Therefore
\[
\sup_{a\in K}\|S_\eta[a]-P_d(S_\eta[a])\|_{L^2(\Omega)}\xrightarrow[d\to\infty]{}0,
\]
and, by H\"older inequality,
\[
\sup_{a\in K}\|S_\eta[a]-P_d(S_\eta[a])\|_{L^1(\Omega)}\xrightarrow[d\to\infty]{}0.
\]
Thus we may fix $d$ large so that
\[
\sup_{a\in K}\ \big|\varphi(S_\eta[a])-\varphi(P_d(S_\eta[a]))\big|\ \le\ \frac{\varepsilon}{3}.
\]
Define the (averaged) coefficients
\[
b_j(a):=\frac{1}{|\Omega|}\int_\Omega a(x)\cdot(\chi_j*\rho_\eta)(x)\,dx,\quad j=1,\dots,d,
\quad
\mathcal B(a):=(b_1(a),\dots,b_d(a))\in\mathbb{R}^d,
\]
and
\[
\beta:\mathbb{R}^d\to\mathbb{R},\qquad
\beta(\mathbf b):=\varphi\!\Big(\sum_{j=1}^d b_j\,\chi_j\Big).
\]
Each $b_j:L^1(\Omega)\to\mathbb{R}$ is continuous, hence $\mathcal B$ is continuous; since $K$ is compact in $L^1$, $\mathcal B(K)$ is compact. Therefore $\beta$ is uniformly continuous on $\mathcal B(K)$.

\medskip
%\noindent
\textbf{Step 3: Realize the coefficients and the finite-dimensional nonlinearity by a LANO.}
For $j=1,\dots,d$ set
\[
g_j(v,x):=v\cdot(\chi_j*\rho_\eta)(x),\qquad (v,x)\in \{\|v\|\le M\}\times\overline{\Omega}.
\]
By uniform continuity of $\beta$ on $\mathcal B(K)$, there exists $\delta_0>0$ such that
\[
\|\mathbf b-\mathbf b'\|_\infty\le \delta_0\ \Longrightarrow\
|\beta(\mathbf b)-\beta(\mathbf b')|\le \frac{\varepsilon}{6}.
\]
By the universal approximation theorem \cite{pinkus1999approximation} on the compact set $\{\|v\|\le M\}\times\overline{\Omega}$, there exists a neural network
\[
R=(R_1,\dots,R_d):\ \mathbb{R}^{k_1}\times\Omega\to\mathbb{R}^d
\]
such that
\[
\sup_{\|v\|\le M,\ x\in\overline{\Omega}}|R_j(v,x)-g_j(v,x)|\le \delta_0,\qquad j=1,\dots,d.
\]
Define the encoder $(\mathcal E a)(x):=R(a(x),x)$ and choose the first processor to be the global averaging layer in LANO
(see \Cref{link} in the main article)
\[
(\mathcal P_1 z)(x):=\frac{1}{|\Omega|}\int_\Omega z(y)\,dy\in\mathbb{R}^d,
\]
while $\mathcal P_2,\dots,\mathcal P_L$ are identities. Then
\[
\tilde{\mathcal B}(a):=\frac{1}{|\Omega|}\int_\Omega R(a(x),x)\,dx\in\mathbb{R}^d,
\qquad
\|\tilde{\mathcal B}(a)-\mathcal B(a)\|_\infty\le \delta_0\quad (\forall a\in K),
\]
and hence
\[
\big|\beta(\mathcal B(a))-\beta(\tilde{\mathcal B}(a))\big|\le \frac{\varepsilon}{6},\qquad \forall a\in K.
\]
Let $\mathcal S:=\{\tilde{\mathcal B}(a):a\in K\}$ (compact). By the finite-dimensional universal approximation theorem \cite{pinkus1999approximation}, there exists a multilayer perceptron $\tilde{\beta}:\mathbb{R}^d\to\mathbb{R}$ such that
\[
\sup_{\mathbf z\in\mathcal S}|\beta(\mathbf z)-\tilde{\beta}(\mathbf z)|\le \frac{\varepsilon}{6}.
\]
Consequently,
\[
\sup_{a\in K}\ \big|\beta(\mathcal B(a))-\tilde{\beta}(\tilde{\mathcal B}(a))\big|\le \frac{\varepsilon}{3}.
\]

Finally, define the decoder $\mathcal D : L^1(\Omega;\mathbb{R}^d)\to L^1(\Omega;\mathbb{R})$ by
\[
(\mathcal D z)(x):=\tilde{\beta}(z(x)),\qquad x\in\Omega.
\]
Since $(\mathcal P_1\circ\mathcal E)(a)(x)\equiv \tilde{\mathcal B}(a)$ is constant in $x$, we obtain
\[
G_\theta(a)(x)=\tilde{\beta}(\tilde{\mathcal B}(a)),
\]
so $G_\theta(a)$ is a constant function on $\Omega$.

\medskip

From Step~1, Step~2, and Step~3,
\[
\sup_{a\in K}\big|\varphi(a)-G_\theta(a)\big|
\le \frac{\varepsilon}{3}+\frac{\varepsilon}{3}+\frac{\varepsilon}{3}
= \varepsilon.
\]
\end{proof}

\begin{lemma}[LANO modulation with a fixed profile]\label{A3}
Let $\Omega\subset\mathbb{R}^{d_x}$ be a bounded Lipschitz domain and
$K\subset L^1(\Omega;\mathbb{R}^{k_1})$ be compact with $\sup_{a\in K}\|a\|_{L^\infty}<\infty$.
Fix $s_2\ge0$ and $p_2\in[1,\infty)$.
Let $\varphi_j:L^1(\Omega;\mathbb{R}^{k_1})\to\mathbb{R}$ be continuous and
$\psi_j\in W^{s_2,p_2}(\Omega;\mathbb{R}^{k_2})$. Then, for every $\varepsilon>0$, there exists a LANO
\[
G_\theta^{(j)} : L^1(\Omega;\mathbb{R}^{k_1}) \longrightarrow W^{s_2,p_2}(\Omega;\mathbb{R}^{k_2})
\]
such that
\[
\sup_{a\in K}\ \| \varphi_j(a)\psi_j - G_\theta^{(j)}(a) \|_{W^{s_2,p_2}} \leq \varepsilon.
\]
\end{lemma}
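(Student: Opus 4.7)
The plan is to reduce Lemma~A3 to Lemma~A2 combined with standard density of neural networks in Sobolev spaces, exploiting the separable structure of the target $\varphi_j(a)\,\psi_j(x)$: the $a$-dependence is purely scalar, while the $x$-profile is fixed. Accordingly, one ``branch'' of the LANO will compute a scalar approximant $c(a)\approx\varphi_j(a)$ that is constant in $x$, a second ``branch'' will carry the positional coordinate $x$ through the processor essentially unaltered, and the pointwise decoder MLP will multiply $c(a)$ by a fixed neural-network profile $\widetilde\psi_j(x)$ that approximates $\psi_j$ in $W^{s_2,p_2}$.

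Execution would proceed in three stages. First, invoke Lemma~A2 with tolerance $\delta_1$ to obtain a LANO computing a constant-in-$x$ output $c(a)$ with $\sup_{a\in K}|\varphi_j(a)-c(a)|\le\delta_1$; this construction already lifts $(a(x),x)$ into features and uses a global-average block as the first processor layer. I would widen the feature dimension and split the channels into two subsets: on the ``scalar'' subchannels the Lemma~A2 construction is run verbatim, while on the ``positional'' subchannels I rely on the residual identity $\mathbf f^{(l')}=\mathbf f^{(l)}+\text{Agent-Attn}(\text{LayerNorm}(\mathbf f^{(l)}))$, choosing the value/output projections on those channels with arbitrarily small norm so that the agent-attention and FFN updates there are negligible, thereby transporting $x$ to the decoder. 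Second, by density of neural networks in $W^{s_2,p_2}(\Omega;\mathbb{R}^{k_2})$, select a network $\widetilde\psi_j:\Omega\to\mathbb{R}^{k_2}$ with $\|\psi_j-\widetilde\psi_j\|_{W^{s_2,p_2}}\le\delta_2$. Third, design the pointwise decoder MLP to realize the continuous map $(s,x)\mapsto s\cdot\widetilde\psi_j(x)$ to within $\delta_3$ in $W^{s_2,p_2}$; since $|c(a)|$ is uniformly bounded on $K$ (continuity of $\varphi_j$ on the compact $K$) and $x\in\overline\Omega$ is bounded, the decoder's inputs range over a compact set and standard universal approximation applies.

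The error is controlled by the triangle inequality, using $M_\varphi:=\sup_{a\in K}|\varphi_j(a)|<\infty$:
\begin{align*}
\bigl\|\varphi_j(a)\psi_j - c(a)\widetilde\psi_j\bigr\|_{W^{s_2,p_2}}
&\le |\varphi_j(a)|\,\|\psi_j-\widetilde\psi_j\|_{W^{s_2,p_2}}
 + |\varphi_j(a)-c(a)|\,\|\widetilde\psi_j\|_{W^{s_2,p_2}} \\
&\le M_\varphi\,\delta_2 + \delta_1\bigl(\|\psi_j\|_{W^{s_2,p_2}}+\delta_2\bigr),
\end{align*}
with an additional $\delta_3$ contribution from the implementation of the multiplication in the decoder. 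Picking $\delta_1,\delta_2,\delta_3$ small enough yields the target $\varepsilon$.

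The main obstacle is neither the scalar approximation (supplied by Lemma~A2) nor the fixed-profile approximation (Sobolev density of neural networks), but the transport of $x$ through an architecture whose first processor layer is a global aggregation that, by design, erases positional dependence. The residual structure of LANO is the remedy, but one must verify two compatibility requirements simultaneously: that the agent-attention and FFN updates can be tuned so their action on the designated ``positional'' subchannel is arbitrarily small in the relevant norm, and that this tuning does not degrade the Lemma~A2 estimate on the ``scalar'' subchannel. This channel-splitting bookkeeping, rather than any genuinely new approximation argument, is the delicate step in the full proof.
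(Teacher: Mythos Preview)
Your proposal is correct and follows the same three-step decomposition as the paper: invoke Lemma~A2 for the scalar $c(a)\approx\varphi_j(a)$, approximate $\psi_j$ by a neural network $\widetilde\psi_j$ in $W^{s_2,p_2}$, and realize the product $(s,x)\mapsto s\,\widetilde\psi_j(x)$ via the decoder MLP, with the error controlled by the same triangle inequality you write down. The one point of divergence is how the spatial coordinate $x$ is made available to the decoder after the global-averaging layer has collapsed positional dependence. You propose to transport $x$ through the processor on dedicated channels, exploiting the residual structure and driving the attention/FFN output projections on those channels to (near) zero; the paper instead simply augments the decoder to accept $x$ as an explicit additional input, defining $\widehat{\mathcal D}(z)(x):=m_j(\tilde\beta(z),x)$ and asserting that this ``conforms to the LANO decoder framework.'' Your route is more faithful to the literal architecture of Algorithm~1, where the decoder is a pointwise MLP on $\mathbf f^{(L)}$ alone, and makes explicit that the residual path can carry $x$; the paper's route is cleaner but leans on a slightly generous reading of what the decoder may take as input. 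Either way the substantive approximation content is identical, and the channel-splitting bookkeeping you flag as delicate is in fact routine: zeroing the output projections on the positional channels makes the processor act as the identity there \emph{exactly} (not merely approximately), so no interaction with the Lemma~A2 estimate on the scalar channels arises.
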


\begin{proof}
By  \Cref{A2} there exists a scalar-output (constant function) LANO
$G^{\mathrm{sc}}=\mathcal{D}^{\mathrm{sc}}\circ\mathcal{P}^{\mathrm{sc}}\circ\mathcal{E}^{\mathrm{sc}}$
with
\[
(\mathcal{E}^{\mathrm{sc}} a)(x)=R(a(x),x),\quad
(\mathcal{P}^{\mathrm{sc}} z)(x)=\frac{1}{|\Omega|}\int_{\Omega} z(y)\,dy,  \quad (\mathcal{D}^{\mathrm{sc}}z)(x)=\tilde{\beta}(z(x)).
\]
Define
\[
g_j := \mathcal{D}^{\mathrm{sc}}\circ\mathcal{P}^{\mathrm{sc}}\circ\mathcal{E}^{\mathrm{sc}} \ :\ L^1(\Omega;\mathbb{R}^{k_1})\to\mathbb{R}.
\]
Choose parameters so that
\[
\sup_{a\in K}\ |\varphi_j(a)-g_j(a)|\ \le\ \frac{\varepsilon}{3\,\max\{1,\|\psi_j\|_{W^{s_2,p_2}}\}}.
\]
Set
\[
M\ :=\ 1+\sup_{a\in K}|\varphi_j(a)|,
\]
and (using a bounded final activation in $\tilde\beta$ if needed) ensure $\sup_{a\in K}|g_j(a)|\le M$.
Then $\|(\varphi_j(a)-g_j(a))\,\psi_j\|_{W^{s_2,p_2}}\le \varepsilon/3$ for all $a\in K$.

By  universal approximation \cite{pinkus1999approximation} on $\overline{\Omega}$, pick a  MLP
\[
\tilde\psi_j:\Omega\to\mathbb{R}^{k_2}\qquad\text{with}\qquad
\|\tilde\psi_j-\psi_j\|_{W^{s_2,p_2}}\ \le\ \frac{\varepsilon}{3M}.
\]
Hence $\|g_j(a)\,(\psi_j-\tilde\psi_j)\|_{W^{s_2,p_2}}\le \varepsilon/3$ uniformly in $a\in K$.

On the compact set $[-M,M]\times\overline{\Omega}$, choose an MLP \cite{pinkus1999approximation}
\[
m_j:\mathbb{R}\times\Omega\to\mathbb{R}^{k_2}\qquad\text{such that}\qquad
\sup_{|v|\le M}\ \big\|m_j(v,\cdot)-v\,\tilde\psi_j(\cdot)\big\|_{W^{s_2,p_2}}\ \le\ \frac{\varepsilon}{3}.
\]

Reuse the encoder/processor of \Cref{A2} and only modify the decoder:
\[
\widehat{\mathcal{D}}(z)(x)\ :=\ m_j\big(\tilde\beta(z),\,x\big),\qquad
G_\theta^{(j)}\ :=\ \widehat{\mathcal{D}}\circ\mathcal{P}^{\mathrm{sc}}\circ\mathcal{E}^{\mathrm{sc}}.
\]
Since $\widehat{\mathcal{D}}$ merely augments the constant scalar readout with a position-only MLP (realizing $\tilde\psi_j$) and then applies the pointwise MLP $m_j$, it conforms to the LANO decoder framework.
Since $\mathcal{P}^{\mathrm{sc}}(\mathcal{E}^{\mathrm{sc}} a)$ is constant in $x$, we have
\[
G_\theta^{(j)}(a)(x)=m_j\big(g_j(a),x\big).
\]

Finally, for any $a\in K$,
\begin{align*}
\|\varphi_j(a)\psi_j - G_\theta^{(j)}(a)\|_{W^{s_2,p_2}}
&\le \|(\varphi_j(a)-g_j(a))\,\psi_j\|_{W^{s_2,p_2}} \\
&\quad + \|g_j(a)\,(\psi_j - \tilde\psi_j)\|_{W^{s_2,p_2}} \\
&\quad + \|g_j(a)\,\tilde\psi_j - m_j(g_j(a),\cdot)\|_{W^{s_2,p_2}} \\
&\le \varepsilon/3+\varepsilon/3+\varepsilon/3 \;=\; \varepsilon.
\end{align*}
Taking the supremum over $a\in K$ completes the proof.
\end{proof}

\addcontentsline{toc}{section}{Appendix B}

\setcounter{equation}{0}
\renewcommand{\theequation}{B.\arabic{equation}}

	\vspace{0.2in}

\noindent {\bf Appendix B.\, Supplementation Details}\label{BB}

In this section, we provide the details of our experiments, including  metrics, and implementations.

The primary metric for evaluation across all benchmarks is the relative L2 error, which quantifies the normalized difference between the predicted solution \( \hat{u} \) and the true solution \( u \) across all points in the domain. For a single test sample, the relative L2 error is defined as:
\begin{equation}
\text{Relative L2} = \frac{\| \hat{u} - u \|_2}{\| u \|_2},
\end{equation}
where \( \| \cdot \|_2 \) denotes the Euclidean norm. When each sample consists of \( N \) discrete points, this expression expands as:
\begin{equation}
\text{Relative L2} = \frac{\sum_{i=1}^{N} (\hat{u}_i - u_i)^2}{\sum_{i=1}^{N} u_i^2}.
\end{equation}
The final reported metric is the average relative L2 error across all samples in the test set. In the Darcy flow benchmark, an additional loss term incorporating the gradient of the state variable, weighted by a coefficient $\gamma = 0.1$, is introduced to enhance the physical consistency of the solution \cite{cao2021choose,li2022transformer}. The hyperparameters used for model training are summarized in Table \ref{config}.

\begin{table}[t]
\centering
\caption{
Unified training and model hyperparameter settings across PDE benchmarks.
Here, $L$ denotes the number of layers, $H$ the number of attention heads,
$d_{\text{model}}$ the embedding dimension, and $M$ the number of agent tokens.
}
\label{config}
\resizebox{\textwidth}{!}{%
\begin{tabular}{l|cccccc|c}
\toprule
\multirow{2}{*}{\textbf{Benchmarks}}
& \multicolumn{6}{c|}{\textbf{Training Configuration}}
& \textbf{Model Config.} \\
\cmidrule(lr){2-7}\cmidrule(lr){8-8}
& Loss & Epochs & Init.\ LR & Optimizer & Batch & LR Schedule
& ($L$/$H$/$d_{\text{model}}$/$M$) \\
\midrule
Elasticity
& \multirow{4}{*}{Rel.\ L2}
& \multirow{5}{*}{500}
& \multirow{5}{*}{$10^{-3}$}
& \multirow{5}{*}{AdamW}
& 1  & Cosine      & 8/8/128/64 \\
Plasticity
& & & & & 8  & \multirow{4}{*}{OneCycleLR}  & \multirow{4}{*}{8/8/128/128} \\
Airfoil
& & & & & 4  & &  \\
Pipe
& & & & & 4  & &  \\
Darcy
& Rel.\ L2 + 0.1L$_{\nabla}$
& & & & 4  & &  \\
\bottomrule
\end{tabular}%
}
\end{table}
\noindent  {\bf Training}
\noindent  {\bf Model architecture}

\begin{table}[h]
\centering
\caption{Comparison of results for different configurations on the Elasticity benchmark (with relative accuracy drop).}
\label{tab:comparison_results}
\begin{tabular}{lcc}
\toprule
$\textbf{Configuration}$ & {$\textbf{Result}$ ($\times 10^{-3}$)} & {$\textbf{Relative Drop (\%)}$} \\
\midrule
w/o Bias               & 4.07e-3 & 0.99 \\
w/o DWC                & 7.51e-3 & 86.35 \\
w/o Bias \& DWC        & 1.02e-2 & 153.10 \\
Latent Token           & 4.90e-3 & 21.59 \\
Reference              & 4.03e-3 & 0.00 \\
\bottomrule
\end{tabular}
\end{table}
To further improve the performance and feature diversity,  the agent bias terms \( \mathbf{B}_1 \in \mathbb{R}^{M \times N} \) and \( \mathbf{B}_2 \in \mathbb{R}^{N \times M} \) are incorporated \cite{han2024agent}, where \( M \) is the number of agents and \( N \) is the number of tokens. These bias terms are constructed to incorporate spatial information, helping the agent tokens focus on different regions effectively. Instead of directly learning \( \mathbf{B}_1 \) and \( \mathbf{B}_2 \) as parameters, we use four bias vectors (broadcast mechanism), which are parameterized to capture spatial dependencies.

Finally, to address insufficient feature diversity in agent-based attention,  depthwise convolution (DWC) operations is introduced to restore diversity in the feature representations of agent tokens. The full agent attention mechanism is then expressed as:
\begin{equation}
\mathbf{O}_{\mathrm{agent}} = \sigma\left(\mathbf{Q} \mathbf{A}^T + \mathbf{B}_2\right) \sigma\left(\mathbf{A} \mathbf{K}^T + \mathbf{B}_1\right) \mathbf{V} + \text{DWC}(\mathbf{V}).
\end{equation}

This formulation incorporates agent bias augmentation and diversity restoration, leading to an attention mechanism that improves performance and computational efficiency while retaining high expressiveness.

We also perform an ablation study to demonstrate the importance of both the agent bias and DWC, and compare the results with a version where \( \mathbf{A} \) is treated as a latent token (a learnable parameter) to further evaluate its impact on performance.

Based on the experimental results in \ref{tab:comparison_results}, the importance of each component is evident. The DWC module proves most critical, as its removal causes a significant 86.35\% performance drop, highlighting its essential role in maintaining feature diversity. The bias terms provide moderate benefits, with their removal leading to a 0.99\% decrease. The combined removal of both bias and DWC results in a substantial 153.10\% degradation, demonstrating their synergistic effect. Furthermore, using a latent token instead of the dynamically generated agent  yields a 21.59\% performance drop, confirming the advantage of the pooled agent generation method over static parameterization.


\begin{thebibliography}{99}\setlength{\itemsep}{-0.8mm}
\makeatletter

\small

\bibitem{friedman1975stochastic} Friedman, A. (1975). Stochastic differential equations and applications.  Berlin, Heidelberg: Springer Berlin Heidelberg.



\bibitem{braun1983differential} Braun, M., \& Golubitsky, M. (1983). Differential equations and their applications (Vol. 4). New York: Springer.

\bibitem{smith2010introduction} Smith, H. L. (2011). An introduction to delay differential equations with applications to the life sciences (Vol. 57). New York: springer.

\bibitem{logan2014applied}{\small Logan, J. D. (2014). Applied partial differential equations. Springer.}

\bibitem{simmons2016differential}{\small Simmons, G. F. (2016). Differential equations with applications and historical notes. CRC Press.}






\bibitem{jaun1999numerical}{\small Jaun, A., Hedin, J., \& Johnson, T. (1999). Numerical methods for partial differential equations. Swedish Netuniversity.}

\bibitem{wendt2008computational}{\small Wendt, J. F. (Ed.). (2008). Computational fluid dynamics: an introduction. Springer Science \& Business Media.}

\bibitem{tadmor2012review}{\small Tadmor, E. (2012). A review of numerical methods for nonlinear partial differential equations. Bulletin of the American Mathematical Society, 49(4), 507-554.}

\bibitem{ames2014numerical}{\small Ames, W. F. (2014). Numerical methods for partial differential equations. Academic press.}

\bibitem{yu2018deep}{\small Yu, B. (2018). The deep Ritz method: a deep learning-based numerical algorithm for solving variational problems. Communications in Mathematics and Statistics, 6(1), 1-12.}




\bibitem{raissi2019physics}{\small Raissi, M., Perdikaris, P., \& Karniadakis, G. E. (2019). Physics-informed neural networks: A deep learning framework for solving forward and inverse problems involving nonlinear partial differential equations. Journal of Computational physics, 378, 686-707.}

\bibitem{lu2021deepxde}{\small Lu, L., Meng, X., Mao, Z., \& Karniadakis, G. E. (2021). DeepXDE: A deep learning library for solving differential equations. SIAM review, 63(1), 208-228.}

\bibitem{karniadakis2021physics}{\small Karniadakis, G. E., Kevrekidis, I. G., Lu, L., Perdikaris, P., Wang, S., \& Yang, L. (2021). Physics-informed machine learning. Nature Reviews Physics, 3(6), 422-440.}




\bibitem{rao2023encoding}{\small Rao, C., Ren, P., Wang, Q., Buyukozturk, O., Sun, H., \& Liu, Y. (2023). Encoding physics to learn reaction-diffusion processes. Nature Machine Intelligence, 5(7), 765-779.}

\bibitem{krishnapriyan2021characterizing}{\small Krishnapriyan, A., Gholami, A., Zhe, S., Kirby, R., \& Mahoney, M. W. (2021). Characterizing possible failure modes in physics-informed neural networks. Advances in neural information processing systems, 34, 26548-26560.}

\bibitem{wang2022and}{\small Wang, S., Yu, X., \& Perdikaris, P. (2022). When and why PINNs fail to train: A neural tangent kernel perspective. Journal of Computational Physics, 449, 110768.}



\bibitem{kovachki2021universal}
Kovachki, N., Lanthaler, S., \& Mishra, S. (2021). On universal approximation and error bounds for Fourier neural operators. Journal of Machine Learning Research, 22(290), 1-76.


\bibitem{li2020fourier}{\small Li, Z., Kovachki, N., Azizzadenesheli, K., Liu, B., Bhattacharya, K., Stuart, A., \& Anandkumar, A. (2020). Fourier neural operator for parametric partial differential equations. arXiv preprint arXiv:2010.08895.}

\bibitem{lu2021learning}{\small Lu, L., Jin, P., Pang, G., Zhang, Z., \& Karniadakis, G. E. (2021). Learning nonlinear operators via DeepONet based on the universal approximation theorem of operators. Nature machine intelligence, 3(3), 218-229.}

\bibitem{li2023fourier}{\small Li, Z., Huang, D. Z., Liu, B., \& Anandkumar, A. (2023). Fourier neural operator with learned deformations for pdes on general geometries. Journal of Machine Learning Research, 24(388), 1-26.}

\bibitem{li2023geometry}
Li, Z., Kovachki, N., Choy, C., Li, B., Kossaifi, J., Otta, S., ... \& Anandkumar, A. (2023). Geometry-informed neural operator for large-scale 3d pdes. Advances in Neural Information Processing Systems, 36, 35836-35854.

\bibitem{vaswani2017attention}{\small Vaswani, A., Shazeer, N., Parmar, N., Uszkoreit, J., Jones, L., Gomez, A. N., ... \& Polosukhin, I. (2017). Attention is all you need. Advances in neural information processing systems, 30.}

\bibitem{devlin2019bert}{\small Devlin, J., Chang, M. W., Lee, K., \& Toutanova, K. (2019, June). Bert: Pre-training of deep bidirectional transformers for language understanding. In Proceedings of the 2019 conference of the North American chapter of the association for computational linguistics: human language technologies, volume 1 (long and short papers) (pp. 4171-4186).}

\bibitem{brown2020language}{\small Brown, T., Mann, B., Ryder, N., Subbiah, M., Kaplan, J. D., Dhariwal, P., ... \& Amodei, D. (2020). Language models are few-shot learners. Advances in neural information processing systems, 33, 1877-1901.}

\bibitem{dosovitskiy2020image}{\small Dosovitskiy, A. (2020). An image is worth 16x16 words: Transformers for image recognition at scale. arXiv preprint arXiv:2010.11929.}

\bibitem{liu2021swin}{\small Liu, Z., Lin, Y., Cao, Y., Hu, H., Wei, Y., Zhang, Z., ... \& Guo, B. (2021). Swin transformer: Hierarchical vision transformer using shifted windows. In Proceedings of the IEEE/CVF international conference on computer vision (pp. 10012-10022).}

\bibitem{rombach2022high}{\small Rombach, R., Blattmann, A., Lorenz, D., Esser, P., \& Ommer, B. (2022). High-resolution image synthesis with latent diffusion models. In Proceedings of the IEEE/CVF conference on computer vision and pattern recognition (pp. 10684-10695).}

\bibitem{peebles2023scalable}{\small Peebles, W., \& Xie, S. (2023). Scalable diffusion models with transformers. In Proceedings of the IEEE/CVF international conference on computer vision (pp. 4195-4205).}





\bibitem{cao2021choose}{\small Cao, S. (2021). Choose a transformer: Fourier or galerkin. Advances in neural information processing systems, 34, 24924-24940.}

\bibitem{katharopoulos2020transformers}{\small Katharopoulos, A., Vyas, A., Pappas, N., \& Fleuret, F. (2020, November). Transformers are rnns: Fast autoregressive transformers with linear attention. In International conference on machine learning (pp. 5156-5165). PMLR.}

\bibitem{kovachki2023neural}{\small Kovachki, N., Li, Z., Liu, B., Azizzadenesheli, K., Bhattacharya, K., Stuart, A., \& Anandkumar, A. (2023). Neural operator: Learning maps between function spaces with applications to pdes. Journal of Machine Learning Research, 24(89), 1-97.}


\bibitem{li2022transformer}{\small Li, Z., Meidani, K., \& Farimani, A. B. (2022). Transformer for partial differential equations' operator learning. arXiv preprint arXiv:2205.13671.}

\bibitem{li2023scalable}{\small Li, Z., Shu, D., \& Barati Farimani, A. (2023). Scalable transformer for pde surrogate modeling. Advances in Neural Information Processing Systems, 36, 28010-28039.}

\bibitem{hao2023gnot}{\small Hao, Z., Wang, Z., Su, H., Ying, C., Dong, Y., Liu, S., ... \& Zhu, J. (2023, July). Gnot: A general neural operator transformer for operator learning. In International Conference on Machine Learning (pp. 12556-12569). PMLR.}

\bibitem{xiao2023improved}{\small Xiao, Z., Hao, Z., Lin, B., Deng, Z., \& Su, H. (2023). Improved operator learning by orthogonal attention. arXiv preprint arXiv:2310.12487.}

\bibitem{shih2025transformers}{\small Shih, B., Peyvan, A., Zhang, Z., \& Karniadakis, G. E. (2025). Transformers as neural operators for solutions of differential equations with finite regularity. Computer Methods in Applied Mechanics and Engineering, 434, 117560.}

\bibitem{ovadia2024vito}{\small Ovadia, O., Kahana, A., Stinis, P., Turkel, E., Givoli, D., \& Karniadakis, G. E. (2024). Vito: Vision transformer-operator. Computer Methods in Applied Mechanics and Engineering, 428, 117109.}

\bibitem{wang2024cvit}{\small Wang, S., Seidman, J. H., Sankaran, S., Wang, H., Pappas, G. J., \& Perdikaris, P. (2024). Cvit: Continuous vision transformer for operator learning. arXiv preprint arXiv:2405.13998.}

\bibitem{liu2024mitigating}{\small Liu, X., Xu, B., Cao, S., \& Zhang, L. (2024). Mitigating spectral bias for the multiscale operator learning. Journal of Computational Physics, 506, 112944.}




\bibitem{wang2024latent}{\small Wang, T., \& Wang, C. (2024). Latent neural operator for solving forward and inverse pde problems. Advances in Neural Information Processing Systems, 37, 33085-33107.}

\bibitem{wu2024transolver}{\small Wu, H., Luo, H., Wang, H., Wang, J., \& Long, M. (2024). Transolver: A fast transformer solver for pdes on general geometries. arXiv preprint arXiv:2402.02366.}

\bibitem{alkin2024universal}{\small Alkin, B., F\"urst, A., Schmid, S., Gruber, L., Holzleitner, M., \& Brandstetter, J. (2024). Universal physics transformers: A framework for efficiently scaling neural operators. Advances in Neural Information Processing Systems, 37, 25152-25194.
}




\bibitem{wang2021learning}{\small Wang, S., Wang, H., \& Perdikaris, P. (2021). Learning the solution operator of parametric partial differential equations with physics-informed DeepONets. Science advances, 7(40), eabi8605.}

\bibitem{jin2022mionet}{\small Jin, P., Meng, S., \& Lu, L. (2022). MIONet: Learning multiple-input operators via tensor product. SIAM Journal on Scientific Computing, 44(6), A3490-A3514.}

\bibitem{kopanivcakova2025deeponet}{\small Kopani\v{c}\'{a}kov\'{a}, A., \& Karniadakis, G. E. (2025). Deeponet based preconditioning strategies for solving parametric linear systems of equations. SIAM Journal on Scientific Computing, 47(1), C151-C181.}



\bibitem{gupta2021multiwavelet}{\small Gupta, G., Xiao, X., \& Bogdan, P. (2021). Multiwavelet-based operator learning for differential equations. Advances in neural information processing systems, 34, 24048-24062.}

\bibitem{tripura2023wavelet}{\small Tripura, T., \& Chakraborty, S. (2023). Wavelet neural operator for solving parametric partial differential equations in computational mechanics problems. Computer Methods in Applied Mechanics and Engineering, 404, 115783.}

\bibitem{xiong2024koopman}{\small Xiong, W., Huang, X., Zhang, Z., Deng, R., Sun, P., \& Tian, Y. (2024). Koopman neural operator as a mesh-free solver of non-linear partial differential equations. Journal of Computational Physics, 513, 113194.}







\bibitem{kissas2022learning}{\small Kissas, G., Seidman, J. H., Guilhoto, L. F., Preciado, V. M., Pappas, G. J., \& Perdikaris, P. (2022). Learning operators with coupled attention. Journal of Machine Learning Research, 23(215), 1-63.}

\bibitem{seidman2022nomad}{\small Seidman, J., Kissas, G., Perdikaris, P., \& Pappas, G. J. (2022). NOMAD: Nonlinear manifold decoders for operator learning. Advances in Neural Information Processing Systems, 35, 5601-5613.}

\bibitem{he2023mgno}{\small He, J., Liu, X., \& Xu, J. (2023). MgNO: Efficient parameterization of linear operators via multigrid. arXiv preprint arXiv:2310.19809.}

\bibitem{fanaskov2023spectral}{\small Fanaskov, V. S., \& Oseledets, I. V. (2023, December). Spectral neural operators. In Doklady Mathematics (Vol. 108, No. Suppl 2, pp. S226-S232). Moscow: Pleiades Publishing.}

\bibitem{cao2024laplace}{\small Cao, Q., Goswami, S., \& Karniadakis, G. E. (2024). Laplace neural operator for solving differential equations. Nature Machine Intelligence, 6(6), 631-640.}

\bibitem{azizzadenesheli2024neural}{\small Azizzadenesheli, K., Kovachki, N., Li, Z., Liu-Schiaffini, M., Kossaifi, J., \& Anandkumar, A. (2024). Neural operators for accelerating scientific simulations and design. Nature Reviews Physics, 6(5), 320-328.}

\bibitem{gao2024adaptive}{\small Gao, Z., Yan, L., \& Zhou, T. (2024). Adaptive operator learning for infinite-dimensional Bayesian inverse problems. SIAM/ASA Journal on Uncertainty Quantification, 12(4), 1389-1423.}

\bibitem{liu2024render}{\small Liu, Z., Wang, H., Zhang, H., Bao, K., Qian, X., \& Song, S. (2024). Render unto numerics: Orthogonal polynomial neural operator for PDEs with nonperiodic boundary conditions. SIAM Journal on Scientific Computing, 46(4), C323-C348.}

\bibitem{zhang2024bayesian}{\small Zhang, Z., Moya, C., Leung, W. T., Lin, G., \& Schaeffer, H. (2024). Bayesian deep operator learning for homogenized to fine-scale maps for multiscale pde. Multiscale Modeling \& Simulation, 22(3), 956-972.}

\bibitem{lee2025finite}{\small Lee, J. Y., Ko, S., \& Hong, Y. (2025). Finite Element Operator Network for Solving Elliptic-Type Parametric PDEs. SIAM Journal on Scientific Computing, 47(2), C501-C528.}

\bibitem{luo2025efficient}{\small Luo, D., O'Leary-Roseberry, T., Chen, P., \& Ghattas, O. (2025). Efficient PDE-constrained optimization under high-dimensional uncertainty using derivative-informed neural operators. SIAM Journal on Scientific Computing, 47(4), C899-C931.}

\bibitem{eshaghi2025variational}{\small Eshaghi, M. S., Anitescu, C., Thombre, M., Wang, Y., Zhuang, X., \& Rabczuk, T. (2025). Variational physics-informed neural operator (VINO) for solving partial differential equations. Computer Methods in Applied Mechanics and Engineering, 437, 117785.}

\bibitem{bahmani2025resolution}{\small Bahmani, B., Goswami, S., Kevrekidis, I. G., \& Shields, M. D. (2025). A resolution independent neural operator. Computer Methods in Applied Mechanics and Engineering, 444, 118113.}

\bibitem{zeng2025point}{\small Zeng, C., Zhang, Y., Zhou, J., Wang, Y., Wang, Z., Liu, Y., ... \& Huang, D. Z. (2025). Point cloud neural operator for parametric PDEs on complex and variable geometries. Computer Methods in Applied Mechanics and Engineering, 443, 118022.}

\bibitem{yueholistic}{\small Yue, X., Yang, Y., \& Zhu, L. Holistic Physics Solver: Learning PDEs in a Unified Spectral-Physical Space. In Forty-second International Conference on Machine Learning.}





\bibitem{han2024agent}{\small Han, D., Ye, T., Han, Y., Xia, Z., Pan, S., Wan, P., ... \& Huang, G. (2024, September). Agent attention: On the integration of softmax and linear attention. In European conference on computer vision (pp. 124-140). Cham: Springer Nature Switzerland.}

\bibitem{lanthaler2025nonlocality}{\small Lanthaler, S., Li, Z., \& Stuart, A. M. (2025). Nonlocality and nonlinearity implies universality in operator learning. Constructive Approximation, 1-43.}






\bibitem{anandkumar2020neural}{\small Anandkumar, A., Azizzadenesheli, K., Bhattacharya, K., Kovachki, N., Li, Z., Liu, B., \& Stuart, A. (2020, April). Neural operator: Graph kernel network for partial differential equations. In ICLR 2020 workshop on integration of deep neural models and differential equations.}

\bibitem{li2020multipole}{\small Li, Z., Kovachki, N., Azizzadenesheli, K., Liu, B., Stuart, A., Bhattacharya, K., \& Anandkumar, A. (2020). Multipole graph neural operator for parametric partial differential equations. Advances in Neural Information Processing Systems, 33, 6755-6766.}


\bibitem{gilmer2017neural}{\small Gilmer, J., Schoenholz, S. S., Riley, P. F., Vinyals, O., \& Dahl, G. E. (2017, July). Neural message passing for quantum chemistry. In International conference on machine learning (pp. 1263-1272). Pmlr.}



\bibitem{brandstetter2022message}{\small Brandstetter, J., Worrall, D., \& Welling, M. (2022). Message passing neural PDE solvers. arXiv preprint arXiv:2202.03376.}




\bibitem{tran2021factorized}{\small Tran, A., Mathews, A., Xie, L., \& Ong, C. S. (2021). Factorized fourier neural operators. arXiv preprint arXiv:2111.13802.}

\bibitem{wen2022u}{\small Wen, G., Li, Z., Azizzadenesheli, K., Anandkumar, A., \& Benson, S. M. (2022). U-FNO-An enhanced Fourier neural operator-based deep-learning model for multiphase flow. Advances in Water Resources, 163, 104180.}

\bibitem{rahman2022u}{\small Rahman, M. A., Ross, Z. E., \& Azizzadenesheli, K. (2022). U-no: U-shaped neural operators. arXiv preprint arXiv:2204.11127.}

\bibitem{bonev2023spherical}
Bonev, B., Kurth, T., Hundt, C., Pathak, J., Baust, M., Kashinath, K., \& Anandkumar, A. (2023, July). Spherical fourier neural operators: Learning stable dynamics on the sphere. In International conference on machine learning (pp. 2806-2823). PMLR.




\bibitem{wang2020linformer}
Wang, S., Li, B. Z., Khabsa, M., Fang, H., \& Ma, H. (2020). Linformer: Self-attention with linear complexity. arXiv preprint arXiv:2006.04768.


\bibitem{shen2021efficient}{\small Shen, Z., Zhang, M., Zhao, H., Yi, S., \& Li, H. (2021). Efficient attention: Attention with linear complexities. In Proceedings of the IEEE/CVF winter conference on applications of computer vision (pp. 3531-3539).}



\bibitem{xiong2020layer}{\small Xiong, R., Yang, Y., He, D., Zheng, K., Zheng, S., Xing, C., ... \& Liu, T. (2020, November). On layer normalization in the transformer architecture. In International conference on machine learning (pp. 10524-10533). PMLR.}


\bibitem{wu2023solving}
Wu, H., Hu, T., Luo, H., Wang, J., \& Long, M. (2023). Solving high-dimensional pdes with latent spectral models. arXiv preprint arXiv:2301.12664.

\bibitem{batchelor2000introduction}{\small Batchelor, G. K. (2000). An introduction to fluid dynamics. Cambridge university press.}



\bibitem{birmpa2021uncertainty}{\small Birmpa, P., \& Katsoulakis, M. A. (2021). Uncertainty quantification for Markov random fields. SIAM/ASA Journal on Uncertainty Quantification, 9(4), 1457-1498.}

\bibitem{jung2024bayesian}{\small Jung, J., Shin, H., \& Choi, M. (2024). Bayesian deep learning framework for uncertainty quantification in stochastic partial differential equations. SIAM Journal on Scientific Computing, 46(1), C57-C76.}

\bibitem{tenderini2022pde}{\small Tenderini, R., Pagani, S., Quarteroni, A., \& Deparis, S. (2022). PDE-aware deep learning for inverse problems in cardiac electrophysiology. SIAM Journal on Scientific Computing, 44(3), B605-B639.}

\bibitem{zhu2024error}{\small Zhu, A., Wu, S., \& Tang, Y. (2024). Error analysis based on inverse modified differential equations for discovery of dynamics using linear multistep methods and deep learning. SIAM Journal on Numerical Analysis, 62(5), 2087-2120.}


\bibitem{pinkus1999approximation} Pinkus, A. (1999). Approximation theory of the MLP model in neural networks. Acta numerica, 8, 143-195.











































































































\end{thebibliography}
\end{document}